\documentclass{article}

\usepackage[preprint]{neurips_2026}

\usepackage[utf8]{inputenc} 
\usepackage[T1]{fontenc}    
\usepackage{hyperref}       
\usepackage{url}            
\usepackage{booktabs}       
\usepackage{amsfonts}       
\usepackage{nicefrac}       
\usepackage{microtype}      
\usepackage{xcolor}         
\usepackage{graphicx}
\usepackage{amsmath}
\usepackage{amssymb}
\usepackage{mathtools}
\usepackage{amsthm}

\theoremstyle{plain}
\newtheorem{theorem}{Theorem}[section]
\newtheorem{proposition}[theorem]{Proposition}

\theoremstyle{definition}
\newtheorem{definition}[theorem]{Definition}
\newtheorem{assumption}[theorem]{Assumption}
\theoremstyle{remark}

\usepackage[dvipsnames]{xcolor}

\usepackage{wrapfig}

\title{Embedding Perturbation may Better Reflect Intermediate-Step Uncertainty in LLM Reasoning}

\author{
  Qihao~Wen \\
  University of Arizona\\
  \texttt{qihaowen@arizona.edu} \\
  \And
  Jiahao~Wang \\
  University of Arizona\\
  \texttt{jiahaow@arizona.edu}
  \And
  Yang~Nan \\
  University of Arizona\\
  \texttt{yangnan@arizona.edu}
  \And
  Pengfei~He \\
  Michigan State University \\
  \texttt{hepengf1@msu.edu}
  \And
  Ravi~Tandon \\
  University of Arizona\\
  \texttt{tandonr@arizona.edu}
  \And
  Han~Xu \\
  University of Arizona\\
  \texttt{xuhan2@arizona.edu}
}

\begin{document}

\maketitle

\begin{abstract}
Large language Models (LLMs) have achieved significant breakthroughs across diverse domains; however, they can still produce unreliable or misleading outputs. For responsible LLM application, Uncertainty Quantification (UQ) techniques are used to estimate a model’s uncertainty about its outputs, indicating the likelihood that those outputs may be problematic. For LLM reasoning tasks, it is essential to estimate the uncertainty not only for the final answer, but also for the intermediate steps of the reasoning, as this can enable more fine-grained and targeted interventions. In this study, we explore what UQ metrics better reflect the LLM's ``intermediate uncertainty'' during reasoning. Our study reveals that an LLM's incorrect reasoning steps tend to contain tokens which are highly sensitive to the perturbations on the preceding token embeddings, indicating the model's uncertainty among multiple competing continuations. In this way, uncertain (possibly incorrect) intermediate steps can be readily identified using this sensitivity score as guidance in practice. In our experiments, we show such perturbation-based metrics achieve stronger uncertainty quantification performance compared with baselines including probability-based, sampling-based and Bayesian-based methods. Meanwhile, such metrics also enjoy good simplicity and efficiency.
\end{abstract}

\vspace{-0.2cm}
\section{Introduction}
\vspace{-0.2cm}

Large Language Models (LLMs) have demonstrated remarkable performance across a wide range of applications, including natural language understanding~\citep{brown2020language, chowdhery2023palm}, reasoning~\citep{wei2022chain, wang2022self}, and decision-making~\citep{chen2021decision}. For the responsible deployment of LLMs, accurate uncertainty quantifying (UQ) metrics~\citep{kuhn2023semantic, kadavath2022language, sriramanan2024llm} are crucial, as they enable users to assess whether a model’s prediction can be trusted: low uncertainty indicates a confident response, whereas high uncertainty suggests that the output requires further scrutiny or should be rejected. For LLM reasoning tasks, it is important to measure not only the uncertainty of the final outcome, but also the uncertainty in each intermediate reasoning step. Such information can provide deep insights into when and how uncertainty arises during reasoning and further enhance fine-grained and targeted interventions, such as the reflection-based~\citep{yao2023react} or self-correction mechanisms~\citep{kumar2024training} in reasoning specific models. In our paper, we refer this form of uncertainty as ``intermediate uncertainty'' in LLM reasoning.

To measure this intermediate uncertainty, existing UQ methods for LLMs are inadequate, as they can be highly complex or struggle to capture the long-term context dependency for auditing reasoning errors. For example, multiple sampling-based approaches~\citep{wang2022self, kuhn2023semantic, jiang2024graph} are commonly used to assess the consistency across multiple generated responses by checking whether they produce the same final answer.
However, intermediate uncertainty estimation may require such comparison for   each individual reasoning step, leading to
a large number of sampling trials and additional auxiliary LLM judgments~\citep{kuhn2023semantic, zhang2024luq}. Besides, alternative methods~\citep{sriramanan2024llm, li2025entropy, fadeeva2024fact} focus on the internal statistics of LLMs, such as token probabilities, entropy, and their variants. These approaches typically assume that tokens with lower generation probabilities indicate higher model uncertainty. However, they can be strongly influenced by word frequency in natural language~\citep{fadeeva2024fact}, which limits their ability to reliably reflect the correctness of the underlying reasoning process.
Moreover, recent study~\citep{lu2025auditing} argues that such methods are also insufficient to capture long-range contextual dependencies, making them less effective at detecting problematic deductions based on earlier context.

In this work, we develop new metrics for intermediate uncertainty quantification that better capture such contextual dependencies. Specifically, we investigate the internal dynamics of LLMs during the reasoning process, with a particular focus on measuring the sensitivity of each token to small perturbations in its preceding context. Given a reasoning problem, we first obtain the complete response generated by the LLM. For each token, we examine whether its token (generation) probability changes significantly under perturbations to the embeddings of its preceding tokens. As illustrated in Figure~\ref{fig:intro_case}, tokens that exhibit higher sensitivity to such perturbations (e.g., the token “4”) may indicate a substantial hesitation in the model selection, therefore implying potential errors. In Section~\ref{sec:geo}, we theoretically show that such high sensitivity indicates the model's strong competition among multiple potential reasoning continuations, rather than reflect token frequency in natural language. Besides, since LLMs operate in an auto-regressive manner, the generation of each token depends solely on its preceding tokens and is not affected by subsequent tokens. Thus, this sensitivity can be efficiently measured by perturbing embeddings of all tokens in the entire response.

In our experiments, we evaluate the effectiveness of the proposed metrics for auditing intermediate uncertainty by assessing their ability to identify reasoning steps at which the LLM begins to make errors. We conduct experiments on math reasoning~\cite{suzgun2022challenging} and logic reasoning~\cite{hendrycksmath2021} tasks. The results show that our approach consistently outperforms existing baselines, including probability-based, Bayesian based~\cite{zhang2026tokurtokenleveluncertaintyestimation}, and multiple-sampling~\cite{jiang2024graph} based methods. In Section~\ref{sec:efficiency}, we demonstrate the efficiency of our method, showing it only introduces minor computational overhead. In Section~\ref{sec:hallucination}, we further compare various UQ metrics for hallucination detection on the FactScore dataset~\cite{min2023factscore}. Our analysis provides insights into the fundamental differences between error patterns in LLM hallucinations and those in reasoning processes.

\begin{figure}[t]
    \centering
    \includegraphics[width=\linewidth]{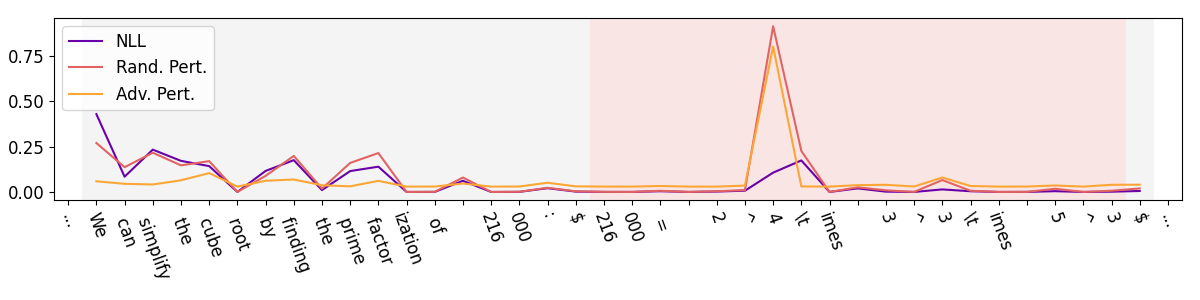}
    \caption{\small A response of Llama in MATH~\citep{hendrycksmath2021} dataset. The earliest error happens at ``$216000 = 2^{\textcolor{red}{{4}}} \times 3^3 \times 5^3$'' (as the peak in red background), while the correct calculation should be $216000 = 2^{\textcolor{red}{{6}}}\times3^3\times5^3$. The perturbation-based metrics flag the token `4' with high uncertainty score.}
    \label{fig:intro_case}
        \vspace{-0.2cm}
\end{figure}

\vspace{-0.2cm}
\section{Related Works}
\vspace{-0.2cm}

\subsection{Uncertainty Quantification of General ML}
Uncertainty Quantification (UQ) has been extensively studied in general machine learning, particularly in supervised learning settings~\cite{kendall2017uncertainties,ovadia2019can}. Existing literature, especially within Bayesian frameworks, typically distinguishes between two major types of uncertainty~\cite{kendall2017uncertainties}: aleatoric uncertainty, which arises from inherent noise or ambiguity in the data, and epistemic uncertainty, which stems from limitations in the model due to insufficient knowledge. In this paper, we focus on uncertainty arising during the internal reasoning process of LLMs. This type of uncertainty shares conceptual similarity to epistemic uncertainty, as it is primarily induced by the model’s reasoning process itself rather than ambiguity in the input data. However, existing formulations from general ML may not be straightforward applied to LLMs, since LLMs are not explicitly Bayesian models and operate in an autoregressive manner. Moreover, their error pattern during reasoning can be substantially more complex. In practice, LLMs often exhibit internal failures, such as misusing previous conclusions, performing incorrect calculations, or making flawed logical deductions. These challenges motivate the need for new uncertainty metrics specifically tailored to LLM reasoning.

\subsection{Step-wise Uncertainty Quantification for LLM Reasoning}
\vspace{-0.2cm}

LLM reasoning tasks pose additional challenges for uncertainty estimation.
For UQ under reasoning tasks, a major line of work assesses reasoning reliability through multiple sampling and check the agreement between samples~\cite{wang2022self,yadkori2024believe,xiong2023can,becker2024cycles}. However, these works mostly focus on the entire reasoning process, and could struggle to pinpoint the intermediate uncertain. As an exception, Graph Uncertainty~\cite{jiang2024graph} builds claim-wise uncertainty metrics, but we find it may fall short to pinpoint the origins of the reasoning error. Besides, there are probability-based UQ methods to find problematic sentences in open-form generation~\cite{sriramanan2024llm, li2025entropy, fadeeva2024fact, zhang2024luq}, which are mainly for factual hallucination detection. However, recent work such as~\cite{lu2025auditing} finds that the probability-based methods could be insufficient to capture long-term context, difficult to capture procedural reasoning errors. As a contemporary work to ours, TokUR~\cite{zhang2026tokurtokenleveluncertaintyestimation} performs weight perturbation to also conduct token-level uncertainty metric. However, we find this method has a higher computation burden especially in large models.

Notably, our work is closely related to step-level verification models for LLMs~\cite{lightman2023letsverifystepstep, he2024advancingprocessverificationlarge, setlur2024rewardingprogressscalingautomated}, which aim to train verifier models to assess the quality of intermediate reasoning steps, thereby facilitating the training of reasoning models. In contrast, our work focuses on internal signals arising from the generation process itself. Such signals may have the potential to assist human annotation, thereby facilitating the development of existing verification models.

\vspace{-0.3cm}
\section{Perturbation-Based UQ Metrics}
\vspace{-0.3cm}

In this section, we formally discuss the proposed strategy to estimate how uncertain an LLM is throughout the reasoning process. We first present the necessary definitions along with some basic uncertainty scores and discuss their insufficiency. Then, we present details of the proposed method. Finally, we provide theoretical insights into the underlying mechanism of our method.

\vspace{-0.2cm}
\subsection{Preliminary and Definitions}
\vspace{-0.2cm}

We consider a given LLM and query it to perform a reasoning task. Given the input query $\mathbf{q} = (x_1, ..., x_m)$, e.g., a math reasoning question with $m$ tokens, the LLM generates a response with $n$ tokens and we denote it as $\mathbf{r} = (x_{m+1}, x_{m+2}, ...x_{m+n})$. For some of the UQ metrics introduced later, we will extract the token embeddings (before Transformer layers) of all tokens including both input tokens and output tokens. It will result in $(m+n)$ embedding vectors defined as $\mathbf{H} = (\mathbf{h}_1, ..., \mathbf{h}_{m+n})$, where each $\mathbf{h}_i \in \mathbb{R}^d$ and $d$ is the dimension of token embedding space. Based on these, we give the definition of LLM's computation on the (generation) probability of each next token as below.

\begin{definition}[\textbf{Token (Generation) Probability}]
Given a sequence containing both input tokens and existing generated tokens $(x_1, ..., x_{t-1})$ with $t>m$, an auto-regressive LLM calculates the probability of each potential next token $v$ in vocabulary $\mathcal{V}$ as $
\Pr(v \mid x_1, x_2, \ldots, x_{t-1})\text{~~for~~} v\in \mathcal{V}.$
For the actually sampled token $x_t$ in the response, we denote the token probability of $x_t$ as:
\begin{equation}\label{eq:prob}
    \mathrm{P}(x_t) = \Pr(x_t \mid x_1, x_2, \ldots, x_{t-1})
\end{equation}
\end{definition}
In practice, it is also frequently formed in Negative Log Likelihood (NLL) as: $\mathrm{NLL}(x_t) = -\log \mathrm{P}(x_t)$. Intuitively, token probability indicates how likely the LLM believes the next token should be $x_t$, reflecting the confidence for generating $x_t$.
Besides, one related metric is token entropy, which measures the dispersion of the next-token's probability distribution over the vocabulary.

\begin{definition}[\textbf{Token Entropy}]
The distribution entropy when generating token $x_t~(t > m)$ is:
\begin{align}
\mathrm{Entropy}(x_t)
= - \sum_{v \in \mathcal{V}}
\Big(\Pr(v \mid x_1, x_2, \ldots, x_{t-1}) \cdot
\log \Pr(v \mid x_1, x_2, \ldots, x_{t-1})\Big)
\label{eq:entropy}
\end{align}
\end{definition}
\vspace{-0.2cm}

In addition, probability margin is another possible uncertainty metric that captures the probability gap between the selected token and its most competitive alternative, reflect the model's uncertainty when choosing among the likely candidates.
\begin{definition}[\textbf{Probability Margin}]
The probability margin when generating token $x_t~(t > m)$ is:
\begin{align}
\mathrm{Margin}(x_t)
= 1 - \Big(\Pr(x_t \mid x_1, x_2, \ldots, x_{t-1}) - \max_{v \in \mathcal{V} / \{x_t\}} \Pr(v \mid x_1, x_2, \ldots, x_{t-1})\Big)
\label{eq:margin}
\end{align}
\end{definition}
\vspace{-0.2cm}

For reasoning tasks with open-form generation, these probability-based metrics can become insufficient. First, they can be highly biased to linguistic frequency. In the reasoning process, there can exist words or n-gram phrases inherently having low probabilities in natural language.
For example, they can be transition words to start a new sentence. As an illustration in Figure~\ref{fig:intro_case}, the starting word “we” is gven a lower probability (a higher NLL) than all other tokens. Similarly, we also frequently find the sign ``\$'' to start a math equation receives lower probability. Yet, this does not suggest a potential error in reasoning. Secondly, according to the recent study~~\cite{lu2025auditing}, these probability-based metrics may also struggle to capture long-term dependencies between a given token and its preceding context, making them insufficient for identifying potential errors that originate from incorrect deductions.

\vspace{-0.2cm}
\subsection{Perturbation-Based UQ Metrics}\label{sec:pert_based_method}
\vspace{-0.2cm}

To alleviate the aforementioned challenges, we treat this LLM token probability (Eq.\ref{eq:prob}) as a function of previous token embeddings $\mathbf{H}_{1:t-1}$: $f_t(\mathbf{H}_{1:t-1}) = \log \Pr(x_t|\mathbf{H}_{1:t-1})$, and we measure the model's sensitivity to the minor perturbations on $\mathbf{H}_{1:t-1}$. In this way, the dependency between the generation at each time and its preceding context can be better captured. In Section~\ref{sec:geo}, we provide further analysis showing that a high sensitivity indicates the model's tendency to select competing tokens instead of the generated one, suggesting the inherent instability in the underlying generation trajectory. Next, we provide details of two versions of our proposed metric and also highlight their simplicity.

\textbf{Random Perturbation.} We consider applying random Gaussian noise $\boldsymbol{\Delta} = (\boldsymbol{\delta}_1, \boldsymbol{\delta}_2,...,\boldsymbol{\delta}_{m+n})$, which are $(m+n)$ i.i.d.~small noises with distribution $N(0, \sigma^2 I)$ added onto all token embeddings, including the input query $\mathrm{q}$ and the full response $\mathrm{r}$. 
We can define the perturbed embeddings as $\tilde{\mathbf{H}} = \mathbf{H} + \mathbf{\Delta}$.  
For each generated token $x_t$, we aim to measure the variance of token probability under $l$ times of embedding perturbation:
\begin{equation}
\mathrm{Pert.}(x_t) = \mathrm{Var}_{\boldsymbol{\Delta}}
\!\left[\log 
\Pr(x_t \mid \tilde{\mathbf{H}}_{1:t-1})
\right]
\label{eq:rand_perturb}
\end{equation}
Notably, since LLMs are autoregressive, the output probability for each generated token $x_{t}$ only depends on its preceding tokens from $x_{1}$ to $x_{t-1}$. Thus, we only have to make samplings for $\mathbf{\Delta}$ over the entire sequence and reuse each perturbation to compute the probabilities for all $t$, instead of performing separate perturbation and inference for each token.

\textbf{Adversarial Perturbation.}
Instead of making sampling and conducting LLM inference for multiple times, we also consider  strategies based on adversarial perturbation~\cite{goodfellow2014explaining, madry2017towards}. In detail, we slightly modify the embeddings $\mathbf{H}$ along the direction that decreases the total log-probability of the generated response, which can be achieved by subtracting the signed gradient:
\begin{equation}
\hat{\mathbf{H}}=\mathbf{H} - \alpha  \cdot\text{sgn} \nabla_\mathbf{H} \Big(\sum_{i = m+1}^{m+n} \log \Pr (x_i \mid \mathbf{H}_{1:i-1})\Big),
\label{eq:adv_h}
\end{equation}
where $\alpha$ is a small constant. Then, we aim to find the generated tokens that are tremendously disrupted by such a perturbation, via examining the metric: 
\begin{equation}
\mathrm{Pert.}(x_t)
= \log \Pr(x_t \mid \mathbf{H}_{1:t-1}) 
 - \log \Pr(x_t \mid \hat{\mathbf{H}}_{1:t-1})
\label{eq:adv}
\end{equation}
which is to find the tokens having a significantly decreased probability after perturbation. We name the method as adversarial perturbation, following the conventions in adversarial learning \cite{goodfellow2014explaining, madry2017towards}. Under this formulation, only one backward and forward pass is needed to obtain the UQ score.

\vspace{-0.2cm}
\subsection{Geometric Distance to the Decision Boundary of Token Selection}\label{sec:geo}
\vspace{-0.2cm}

In this part, we analyze the characteristic of the function $f_t(\mathbf{H}_{1:t-1})$. Without loss of generality, we use $f_t(\mathbf{H})$ for simplicity. Our Theorem~\ref{thm:norm_approx_boundary} will show that both random and adversarial perturbation inversely approximate the geometric distance of $\mathbf{H}$ to the LLM's decision boundary, i.e., the boundary between outputting the current token and an alternative ``rival token'' (see Proposition~\ref{thm:prop}). This \textbf{reflects the model’s uncertainty among multiple competing continuations}, and hence serving as an indicator for potential reasoning errors. Meanwhile, such metric is also less impacted by token (n-gram) frequency in language.

First, we provide Proposition~\ref{thm:prop} to show both random and adversarial perturbation metric actually approximate the gradient norm of the current input, which is later connected to the decision boundary.
\begin{proposition}\label{thm:prop}
We let the gradient vector $\delta$ as $\delta = \nabla_{\mathbf{H}} f_t(\mathbf{H})$. Then, for random perturbation with sufficiently small noise added to the embedding, it approximates the square of gradient norm scaled by $\sigma^2$: $\mathrm{Pert.}(x_t) = \mathrm{Var}_{\boldsymbol{\Delta}}
\!\big[f_t(\mathbf{H} + \Delta)\big] \approx \sigma^2 \lVert \delta \rVert_2^2$. 
Similarly, for adversarial perturbation with sufficiently small scale, it approximates the gradient norm scaled by $\alpha$: $\mathrm{Pert.}(x_t) = f_t(\mathbf{H}) - f_t \Big(\mathbf{H} - \alpha  \cdot \text{sgn} \nabla_\mathbf{H} f_t(\mathbf{H})\Big) \approx \alpha \lVert \delta \rVert_1$.\footnotemark
\end{proposition}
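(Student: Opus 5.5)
The plan is to treat both claims as consequences of a first-order Taylor expansion of $f_t$ around $\mathbf{H}$, with remainder terms controlled by local smoothness. We assume $f_t$ is twice continuously differentiable in a neighborhood of $\mathbf{H}$, which holds for standard Transformer architectures with smooth activations, so that $f_t(\mathbf{H}+\mathbf{u}) = f_t(\mathbf{H}) + \langle \delta, \mathbf{u}\rangle + R(\mathbf{u})$ with $|R(\mathbf{u})| \le \tfrac{L}{2}\lVert \mathbf{u}\rVert_2^2$ locally. Here $\delta$ and $\mathbf{u}$ are flattened to vectors in $\mathbb{R}^{(t-1)d}$. Only the coordinates of $\mathbf{H}_{1:t-1}$ matter, since $f_t$ does not depend on later embeddings, so noise on later positions has no effect.

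\textbf{Random perturbation.} Substitute $\mathbf{u}=\boldsymbol{\Delta}$ with $\boldsymbol{\Delta}\sim N(0,\sigma^2 I)$. The linear term $\langle \delta, \boldsymbol{\Delta}\rangle$ is Gaussian with mean $0$ and variance $\sigma^2\lVert\delta\rVert_2^2$. Therefore
\[
\mathrm{Var}[f_t(\mathbf{H}+\boldsymbol{\Delta})] = \sigma^2\lVert\delta\rVert_2^2 + 2\,\mathrm{Cov}(\langle\delta,\boldsymbol{\Delta}\rangle, R) + \mathrm{Var}(R).
\]
Using $\mathbb{E}\lVert\boldsymbol{\Delta}\rVert_2^4 = O(\sigma^4 D^2)$ with $D=(t-1)d$, we get $\mathrm{Var}(R)=O(L^2\sigma^4D^2)$. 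Cauchy--Schwarz then bounds the covariance term by $O(\sigma^3 L D \lVert\delta\rVert_2)$. Dividing by $\sigma^2$ gives $\mathrm{Pert.}(x_t)/\sigma^2 \to \lVert\delta\rVert_2^2$ as $\sigma\to 0$, which is the claimed approximation.

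\textbf{Adversarial perturbation.} Set $\mathbf{u} = -\alpha\,\mathrm{sgn}(\delta)$. The linear term is $-\alpha\langle\delta,\mathrm{sgn}(\delta)\rangle = -\alpha\lVert\delta\rVert_1$, so $f_t(\mathbf{H}) - f_t(\mathbf{H}+\mathbf{u}) = \alpha\lVert\delta\rVert_1 - R(\mathbf{u})$. Since $\lVert\mathbf{u}\rVert_2^2 \le \alpha^2 D$, the remainder is $O(L\alpha^2 D)$, and the ratio to $\alpha$ tends to $\lVert\delta\rVert_1$.

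The main obstacle is a mismatch between the statement and the method. As stated, the proposition takes the sign of the per-token gradient $\nabla f_t$. Eq.~\eqref{eq:adv_h}, however, uses the sign of the gradient of the summed log-likelihood $\sum_i f_i$. I would prove the statement as written. I would then remark that for the actual metric the leading term is $\alpha\langle\delta,\mathrm{sgn}(\nabla\sum_i f_i)\rangle$, which is at most $\alpha\lVert\delta\rVert_1$, with equality only when the signs align. Otherwise it is simply a routine approximation, and the only honest caveat is that the error constants grow with the dimension $D$, so "sufficiently small" means $\sigma,\alpha \ll \lVert\delta\rVert/(LD)$.
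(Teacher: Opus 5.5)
Your proposal is correct and takes the same route as the paper: the paper also linearizes $f_t$ and reads off $\delta^\top(\sigma^2 I)\delta=\sigma^2\lVert\delta\rVert_2^2$ and $\alpha\,\delta^\top\mathrm{sgn}(\delta)=\alpha\lVert\delta\rVert_1$. Your explicit remainder bounds make rigorous what the paper leaves as ``$\approx$'', and your caveat about the summed-gradient sign in the definition of $\hat{\mathbf{H}}$ matches the paper's own footnote. One small point to tighten is that Gaussian noise is unbounded, so the local bound $|R(\mathbf{u})|\le\tfrac{L}{2}\lVert\mathbf{u}\rVert_2^2$ cannot be integrated against it directly; truncate at $\lVert\boldsymbol{\Delta}\rVert_2\le r$ and control the tail event using, e.g., global boundedness of $f_t$ (the final normalization layer keeps the logits bounded), which makes that contribution exponentially small in $r^2/\sigma^2$.
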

We defer all proofs in this section to Appendix~\ref{sec:proof}.  Next, we will show that gradient norm inversely approximates the geometric distance of $\mathbf{H}$ to the LLM's decision boundary, where the other side is to output a ``rival token'' $x_t^*\neq x_t$ that is defined as below.
\footnotetext{Our actual algorithm in Adv.~Pert~following Eq.\ref{eq:adv_h} does not strictly calculate the gradient $\nabla_\mathbf{H} f_t(\mathbf{H})$ for each $x_t$. Instead, we calculate gradient with respect to the whole generation for simplicity.}

\begin{proposition}[\textbf{Rival Token $x_t^*$}]\label{def:rival_token}
When generating $x_t$,
there must exist a rival token $x_t^* \neq x_t$ (with $f_t^*(\mathbf{H}) = \log \Pr (x_t^* | \mathbf{H})$ and $\delta^*=  \nabla_\mathbf{H} f_t^*(\mathbf{H})$) satisfying: $\delta^T\delta^* < 0$. It means their gradients with respect to embedding are negatively related. 
\end{proposition}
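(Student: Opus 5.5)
The plan is to use the softmax structure of the output distribution and the constraint that probabilities sum to one. Write $p_v(\mathbf{H}) = \Pr(v \mid \mathbf{H})$ for $v \in \mathcal{V}$, so that $f_t = \log p_{x_t}$ and, for any $v$, $\nabla_{\mathbf{H}} \log p_v = \nabla_{\mathbf{H}} p_v / p_v$. Since $\sum_{v\in\mathcal{V}} p_v(\mathbf{H}) = 1$ identically in $\mathbf{H}$, differentiating gives $\sum_{v} \nabla_{\mathbf{H}} p_v = 0$. Equivalently,
\begin{equation*}
\sum_{v\in\mathcal{V}} p_v \, \nabla_{\mathbf{H}} \log p_v = 0 .
\end{equation*}
This weighted identity is the only structural fact I intend to use.

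The key step is to take the inner product of this identity with $\delta = \nabla_{\mathbf{H}} f_t(\mathbf{H})$. Separating the $v = x_t$ term gives
\begin{equation*}
\sum_{v \neq x_t} p_v \, \delta^T \nabla_{\mathbf{H}} \log p_v = - p_{x_t} \lVert \delta \rVert_2^2 .
\end{equation*}
Suppose $\delta \neq 0$. The right-hand side is strictly negative because softmax probabilities are strictly positive, so $p_{x_t} > 0$. All weights $p_v$ on the left are positive, so at least one summand must be strictly negative. That is, some $v = x_t^* \neq x_t$ satisfies $\delta^T \delta^* < 0$ with $\delta^* = \nabla_{\mathbf{H}} f_t^*(\mathbf{H})$. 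This gives the claimed rival token. The argument also suggests a natural choice, namely the $v$ that contributes most negatively to the sum.

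The main obstacle is the degenerate case $\delta = 0$, where the strict inequality cannot hold for any token. The statement implicitly assumes a nonvanishing gradient. This matches the regime of Proposition~\ref{thm:prop}, where the perturbation metrics are proportional to $\lVert\delta\rVert$ and are therefore only informative when $\delta \neq 0$. I would state this hypothesis explicitly, and note that it holds generically: exact vanishing of $\nabla_{\mathbf{H}} f_t$ at the observed embeddings is a measure-zero event for a smooth network.

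A second check is that the vocabulary has at least two tokens, which is needed for the sum over $v \neq x_t$ to be nonempty. This is automatic, since otherwise $p_{x_t} \equiv 1$ and $\delta = 0$. Finally, I would remark that the rival need not be unique. The identity only guarantees that the $p_v$-weighted average of the alignments $\delta^T \nabla_{\mathbf{H}} \log p_v$ over $v \neq x_t$ is negative. This is what the later decision-boundary argument will exploit.
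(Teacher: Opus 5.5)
Your proof is correct, but it takes a different route from the paper's.

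The paper argues by an explicit perturbation. For small $a>0$, moving to $\mathbf{H}-a\delta$ lowers $\Pr(x_t\mid\cdot)$, so by conservation of probability mass some other token $x_t^*$ must gain probability. A first-order expansion $f^*(\mathbf{H}-a\delta)-f^*(\mathbf{H})\approx -a\,\delta^T\delta^*$ is then read as $\delta^T\delta^*<0$. That version matches the paper's narrative: the rival is literally a token whose probability rises when one pushes against $\delta$, as in Figure~\ref{fig:illustrate}. As a proof, however, it is only heuristic. Finiteness of $\mathcal{V}$ lets one fix a single token along a sequence $a_k\to 0$. Even so, dividing $-a_k\,\delta^T\delta^*+o(a_k)>0$ by $a_k$ and passing to the limit gives only $\delta^T\delta^*\le 0$ for the selected token, not the strict inequality.

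You instead differentiate the normalization constraint to obtain the exact identity $\sum_{v}p_v\nabla_{\mathbf{H}}\log p_v=0$ and pair it with $\delta$. This gives the strict inequality with no approximation. It also gives a quantitative statement: the $p_v$-weighted alignment over $v\neq x_t$ equals exactly $-p_{x_t}\lVert\delta\rVert_2^2$.

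Your explicit hypothesis $\delta\neq 0$ is genuinely needed. The paper's proof needs it too but leaves it implicit: otherwise $\mathbf{H}-a\delta=\mathbf{H}$ and its first step fails. The measure-zero genericity remark is unnecessary, and as stated it is not justified without further conditions on the network. Simply state $\delta\neq 0$ as an assumption.
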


As an illustration,  Figure~\ref{fig:illustrate} shows a generated token ``$x_t =$ telling'' during reasoning (the full case is from Figure~\ref{fig:main_case}), and its rival token ``$x^*_t =$ lying'', which can lead distinct reasoning continuations. The opposite gradient in Proposition~\ref{def:rival_token} indicates the surge of the probability of rival token ``lying'' if perturbation is applied, as well as the decrease of the token ``telling''. Next, we prove that our metric (inversely) approximates the geometric distance of $\mathbf{H}$ to LLM's decision boundary, quantifying the model's tendency to select $x_t^*$ instead of $x_t$.

\begin{theorem}[\textbf{Gradient Norm Inversely Approximates Distance to Decision Boundary}]\label{thm:norm_approx_boundary}
We define function $g(\mathbf{H})$ as LLM's decision of selection between $x_t$ and $x_t^*$: $g(\mathbf{H}) = f_t(\mathbf{H}) - f_t^*(\mathbf{H})$. We also define $\mathbf{H}$'s distance to the boundary as $\gamma(H) = \min_{\mathbf{H}_b \in B_0} \| \mathbf{H} - \mathbf{H}_b \|_2$ where $B_0 = \{\mathbf{H}: g(\mathbf{H}) = 0\}$  is the decision boundary. Assuming $g(H)$ is controlled $|g(\mathbf{H})|\leq C$, and $g(\mathbf{H})$ is locally linear, the dimension of $\delta$ is denoted as $D$, we have:
\begin{equation}
    \gamma(\mathbf{H}) \approx \frac{\lvert g(\mathbf{H}) \rvert}{\lVert \nabla_\mathbf{H} g(\mathbf{H}) \rVert_2} = \frac{\lvert g(\mathbf{H}) \rvert}{\lVert \delta - \delta^* \rVert_2} \leq \frac{C}{\lVert \delta \rVert_2} \leq \frac{\sqrt{D} C}{\lVert \delta \rVert_1}
\end{equation}
\end{theorem}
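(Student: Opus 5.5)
The proof is a chain of four relations, established one at a time.

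\textbf{Step 1: the distance formula.} Under the local linearity assumption we write
\[
g(\mathbf{H}') \approx g(\mathbf{H}) + \nabla_\mathbf{H} g(\mathbf{H})^T(\mathbf{H}'-\mathbf{H})
\]
in a neighborhood containing the nearest boundary point. The boundary $B_0$ is then approximately the hyperplane $\{\mathbf{H}' : g(\mathbf{H}) + \nabla g^T(\mathbf{H}'-\mathbf{H}) = 0\}$. Its Euclidean distance from $\mathbf{H}$ is the standard point-to-hyperplane formula $|g(\mathbf{H})|/\lVert\nabla g(\mathbf{H})\rVert_2$.

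To justify this, we minimize $\lVert\mathbf{H}'-\mathbf{H}\rVert_2$ subject to the linear constraint with a Lagrange multiplier. The minimizer is $\mathbf{H}_b = \mathbf{H} - g(\mathbf{H})\nabla g/\lVert\nabla g\rVert_2^2$. Since $g = f_t - f_t^*$, linearity of the gradient gives $\nabla g = \delta - \delta^*$, which is the first equality.

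\textbf{Step 2: the first inequality, via Proposition~\ref{def:rival_token}.} The numerator is at most $C$ by the assumption $|g|\le C$. For the denominator we expand
\[
\lVert\delta-\delta^*\rVert_2^2 = \lVert\delta\rVert_2^2 - 2\delta^T\delta^* + \lVert\delta^*\rVert_2^2 .
\]
The rival token is chosen so that $\delta^T\delta^*<0$, so the cross term is nonnegative. Hence $\lVert\delta-\delta^*\rVert_2^2 \ge \lVert\delta\rVert_2^2$, which gives $|g|/\lVert\delta-\delta^*\rVert_2 \le C/\lVert\delta\rVert_2$.

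\textbf{Step 3: the last inequality.} This is the norm comparison $\lVert\delta\rVert_1 \le \sqrt{D}\,\lVert\delta\rVert_2$ in $\mathbb{R}^D$. It follows from Cauchy--Schwarz applied to $|\delta|$ and the all-ones vector. Rearranging gives $1/\lVert\delta\rVert_2 \le \sqrt{D}/\lVert\delta\rVert_1$. Combining this with Proposition~\ref{thm:prop} then ties both perturbation scores to the inverse of this distance bound, though that link is interpretive rather than part of the inequality chain.

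\textbf{Main obstacle.} The delicate step is Step 1. The ``$\approx$'' hides the requirement that the nearest boundary point lies inside the region where $g$ is linear, so that the true boundary coincides with the linearized one. Gradient norms are only reasonable there, and we also need $\nabla g\neq 0$.

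I would state this explicitly as the interpretation of ``locally linear'': $g$ is affine on a ball around $\mathbf{H}$ that contains $\mathbf{H}_b$. With that reading, the first relation becomes an equality. Everything downstream is then exact, given Proposition~\ref{def:rival_token} and the boundedness assumption.
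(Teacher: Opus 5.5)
Your proposal is correct and follows essentially the same route as the paper: you linearize $g$ to get $\gamma(\mathbf{H})\approx|g(\mathbf{H})|/\lVert\delta-\delta^*\rVert_2$, bound the numerator by $C$, use $\delta^T\delta^*<0$ to get $\lVert\delta-\delta^*\rVert_2\ge\lVert\delta\rVert_2$, and finish with $\lVert\delta\rVert_1\le\sqrt{D}\,\lVert\delta\rVert_2$. The differences are minor: your Lagrange-multiplier computation makes explicit the parallelism of $\nabla_{\mathbf{H}} g$ and $\mathbf{H}_b-\mathbf{H}$, which the paper only asserts via the equality case of H\"older, and the condition $\nabla g\neq 0$ you flag needs no separate assumption, since $\delta^T\delta^*<0$ forces $\delta\neq 0$ and hence $\lVert\delta-\delta^*\rVert_2\ge\lVert\delta\rVert_2>0$.
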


\begin{wrapfigure}{r}{0.5\textwidth}
    \centering
        \vspace{-0.6cm}
    \includegraphics[width=\linewidth]{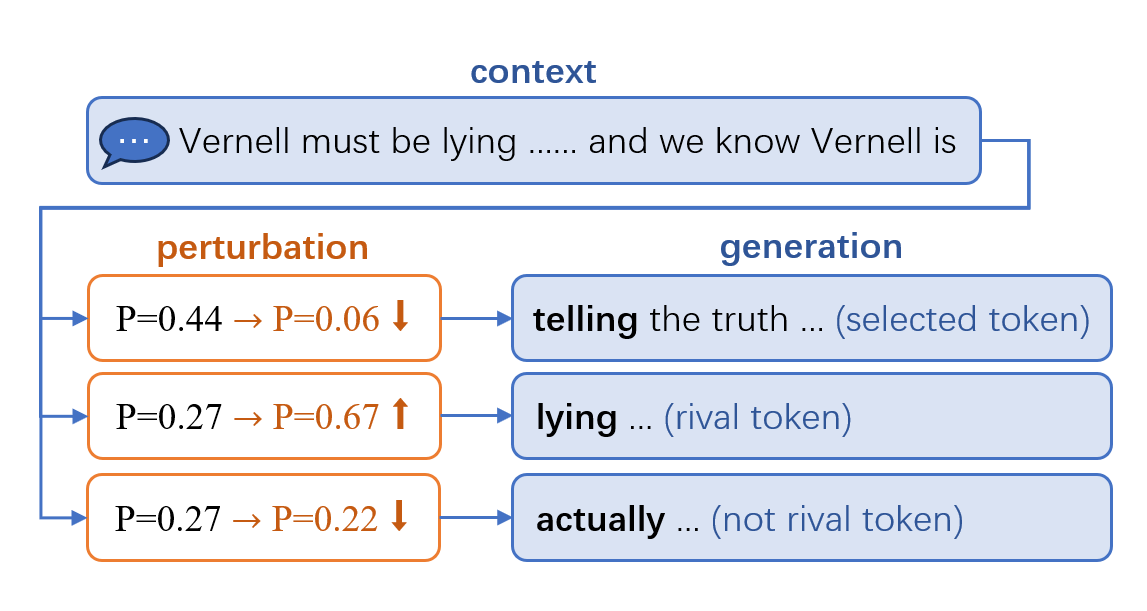}
    \vspace{-0.7cm}
    \caption{\small An illustration of the (adversarial) embedding perturbation leads the surging of a rival token.}
    \label{fig:illustrate}
        \vspace{-0.3cm}
\end{wrapfigure}

Thus, a larger perturbation sensitivity that paired with a larger $\lVert\delta\rVert_2$ or $\lVert\delta\rVert_1$ indicates a closer distance from embedding $\mathbf{H}$ to the decision boundary against the rival token $x_t^*$, which reflects the model's hesitation between different selections based on current context. For example, according to Figure~\ref{fig:illustrate}, after (adversarial) perturbation, probability of the rival token ``lying'' dramatically increases, suggesting the short distance of the embedding $\mathbf{H}$ to the decision boundary. Remarkably, this discussed rival token here is not necessarily the ``best-alternative'' token based on the original outputs without perturbation (see Eq.\ref{eq:margin}). As an instance, Figure~\ref{fig:illustrate} shows both ``lying'' and ``actually'' have second-largest probability, but only the token ``lying'' has a surged probability under perturbation. This makes our method inherently different from probability-based methods, which also makes our metric less affected by token/n-gram frequency in natural language.

\vspace{-0.2cm}
\section{Experiment}\label{sec:main_exp}
\vspace{-0.3cm}

\begin{table}[t]
    \centering
    \caption{Result on MATH. The 95\% conf.~interval is $\approx\pm 0.057$ for 300 Bernoulli trials (p=0.5).}
    \label{tab:math}
    \vspace{-0.2cm}
    \resizebox{0.85\linewidth}{!}{

    \begin{tabular}{c|ccc|ccc|ccc|ccc}
    \hline \hline
    \textbf{Token.~Level} & \multicolumn{3}{c|}{\textbf{Llama}}
    & \multicolumn{3}{c|}{\textbf{OpenMath}}
    & \multicolumn{3}{c|}{\textbf{Qwen}}
    & \multicolumn{3}{c}{\textbf{Mistral}} \\
    \hline
    & \multicolumn{3}{c|}{Acc=0.64}
    & \multicolumn{3}{c|}{Acc=0.77}
    & \multicolumn{3}{c|}{Acc=0.86}
    & \multicolumn{3}{c}{Acc=0.15} \\
    &  top3 & top5 & 1\%  &  top3 & top5 & 1\%  &  top3 & top5 & 1\%  &  top3 & top5 & 1\% \\
    NLL  &  0.37 & 0.50 & 0.49  &  0.36 & 0.50 & 0.47  &  0.31 & 0.45 & 0.52  &  0.33 & 0.46 & 0.38 \\
    Entropy  &  0.38 & 0.48 & 0.48  &  0.35 & 0.51 & 0.45  &  0.30 & 0.41 & 0.53  &  0.32 & 0.39 & 0.37 \\
    Margin & 0.37 & 0.52 & 0.51  &  0.38 & 0.50 & 0.46  &  0.27 & 0.40 & 0.54  &  0.42 & 0.55 & 0.47 \\
    TokUR  & 0.41 & 0.58 & 0.55  &  0.41 & 0.54 & 0.48  &  0.29 & 0.42 & 0.56  &  0.36 & 0.49 & 0.41 \\
    \hline
    Rand. Pert.  &  0.48 & 0.60 & 0.59  &  0.43 & 0.57 & 0.52  &  0.35 & \textbf{0.49} & 0.59  &  \textbf{{0.50}} & \textbf{{0.64}} & \textbf{{0.58}} \\
    Adv. Pert.  &  {\textbf{0.54}} & {\textbf{0.63}} & {\textbf{0.65}}  &  \textbf{{0.50}} & \textbf{{0.63}} & \textbf{{0.59}}  &  \textbf{{0.40}} & \textbf{{0.49}} & \textbf{{0.61}}  &  0.46 & 0.63 & 0.55 \\
    \hline \hline
    \end{tabular}
    }
    
    \vspace{0.1cm}
    \resizebox{0.85\linewidth}{!}{

    \begin{tabular}{c|ccc|ccc|ccc|ccc}
    \hline \hline
    \textbf{Sent.~Level} & \multicolumn{3}{c|}{\textbf{Llama}}
    & \multicolumn{3}{c|}{\textbf{OpenMath}}
    & \multicolumn{3}{c|}{\textbf{Qwen}}
    & \multicolumn{3}{c}{\textbf{Mistral}} \\
    \hline
    &  top1 & top2 & top3  &  top1 & top2 & top3  &  top1 & top2 & top3  &  top1 & top2 & top3   \\
    CCP  &  0.24 & 0.41 & 0.51  &  0.26 & 0.39 & 0.55  &  0.17 & 0.33 & 0.43  &  0.35 & 0.59 & 0.69 \\
    TokenSAR  & 0.26 & 0.41 & 0.50  &  0.22 & 0.39 & 0.55  &  0.19 & 0.33 & 0.44  &  0.37 & 0.58 & 0.68\\
    Graph Uncert.  & 0.20 & 0.31 & 0.37  &  0.16 & 0.30 & 0.43  &  0.15 & 0.21 & 0.26  &  0.30 & 0.44 & 0.55 \\
    \hline
    Rand. Pert.  & 0.32 & 0.52 & 0.64  &  0.30 & 0.48 & 0.59  &  \textbf{0.22} & \textbf{0.37} & \textbf{0.53}  &  \textbf{0.46} & 0.65 & \textbf{0.78} \\
    Adv. Pert.  & \textbf{0.35} & \textbf{0.55} & \textbf{0.67}  &  \textbf{0.33} & \textbf{0.52} & \textbf{0.63}  &  \textbf{0.22} & \textbf{0.37} & 0.51  &  0.44 & \textbf{0.66} & 0.77 \\
    \hline \hline
    \end{tabular}
    }
    \vspace{0.3cm}
    \centering
    \caption{Result on Big-Bench. The 95\% conf.~interval is $\approx\pm 0.044$ for 500 Bernoulli trials (p=0.5).}
    \label{tab:bbh}
    \vspace{-0.2cm}
    \resizebox{0.7\linewidth}{!}{

    \begin{tabular}{c|ccc|ccc|ccc}
    \hline \hline
    \textbf{Token-level} & \multicolumn{3}{c|}{\textbf{Llama}}
    & \multicolumn{3}{c|}{\textbf{Qwen}}
    & \multicolumn{3}{c}{\textbf{Mistral}} \\
    \hline
    & \multicolumn{3}{c|}{Acc=0.77}
    & \multicolumn{3}{c|}{Acc=0.81}
    & \multicolumn{3}{c}{Acc=0.44} \\
    &  top3 & top5 & 1\%  &  top3 & top5 & 1\%  &  top3 & top5 & 1\% \\
    NLL  &  0.29 & 0.39 & 0.34  &  0.23 & 0.36 & 0.34  &  0.25 & 0.35 & 0.29  \\
    Entropy  &  0.28 & 0.40 & 0.32  &  0.22 & 0.31 & 0.33  &  0.21 & 0.30 & 0.25   \\
    Margin  & 0.29 & 0.43 & 0.34  &  0.22 & 0.33 & 0.33  &  0.29 & 0.42 & 0.34 \\
    TokUR.  & 0.29 & 0.41 & 0.35  &  0.28 & 0.39 & 0.38  &  0.35 & 0.48 & 0.40 \\
    \hline
    Rand. Pert.  &  0.34 & 0.46 & 0.41  &  0.29 & 0.41 & 0.40  &  0.40 & \textbf{0.54} & 0.47   \\
    Adv. Pert.  &  {\textbf{0.38}} & {\textbf{0.48}} & {\textbf{0.44}}  &  \textbf{{0.37}} & \textbf{{0.49}}  & \textbf{{0.49}} & \textbf{{0.43}} & \textbf{0.54} & \textbf{0.48}  \\
    \hline \hline
    \end{tabular}
    }
    
    \vspace{0.1cm}

    \resizebox{0.7\linewidth}{!}{

    \begin{tabular}{c|ccc|ccc|ccc}
    \hline \hline
    \textbf{Sent.~Level} & \multicolumn{3}{c|}{\textbf{Llama}}
    & \multicolumn{3}{c|}{\textbf{Qwen}}
    & \multicolumn{3}{c}{\textbf{Mistral}} \\
    \hline
    &  top1 & top2 & top3  &  top1 & top2 & top3  &  top1 & top2 & top3   \\
    CCP  & 0.19 & 0.36 & 0.46  &  0.11 & 0.25 & 0.32  &  0.27 & 0.49 & 0.63 \\
    TokenSAR  & 0.22 & 0.39 & 0.51  &  0.17 & 0.28 & 0.34  &  0.30 & 0.52 & 0.66 \\
    Graph Uncert.  & 0.20 & 0.35 & 0.45  &  0.16 & 0.22 & 0.26  &  0.18 & 0.34 & 0.48 \\
    \hline
    Rand. Pert.  & 0.24 & 0.39 & 0.53  &  0.18 & 0.29 & 0.38  &  0.37 & 0.60 & 0.74 \\
    Adv. Pert.  & \textbf{0.28} & \textbf{0.44} & \textbf{0.62}  &  \textbf{0.26} & \textbf{0.37} & \textbf{0.46}  &  \textbf{0.40} & \textbf{0.63} & \textbf{0.78} \\
    \hline \hline
    \end{tabular}
    }
    \vspace{-0.2cm}
\end{table}

In this section, we conduct experiments to evaluate whether the proposed UQ metrics can effectively identify intermediate uncertainty in LLM reasoning. Specifically, we examine whether these metrics provide informative signals for locating erroneous reasoning steps. We will first introduce experimental setups, pipeline and baselines in Section \ref{sec:experiment_setup} (additionally in Appendix~\ref{sec:exp_detail}). Results, analysis and case studies will be presented in Section \ref{sec:experiment_results}. Finally, the discussion of efficiency will be included in Section \ref{sec:efficiency}.

\vspace{-0.2cm}
\subsection{Experimental Setup and Metrics}\label{sec:experiment_setup}
\vspace{-0.2cm}

\textbf{Setups.}
We focus our experiments on math reasoning under 6 subsets of MATH~\cite{hendrycksmath2021}, and pure-logical reasoning under 10 subsets of BIG-Bench~Hard~(BBH)\cite{suzgun2022challenging}. We consider four representative open-source LLMs: Llama-3.1-8B-Instruct~\cite{dubey2024llama}, OpenMath2-Llama3.1-8B~\cite{toshniwal2024openmath2} (only applied on math reasoning), Qwen-2.5-7B-Instruct~\cite{qwen2} and Mistral-7B-Instruct-v0.3 \cite{jiang2023mistral7b}. For the generality and simplicity of our evaluation, we fix the system prompt as ``You are a helpful assistant.'' Since the experiment is conducted in reasoning tasks, we set a low sampling temperature of all LLMs to be $T = 0.2$.
In Table~\ref{tab:math_t05} and~\ref{tab:bbh_t05} of Appendix~\ref{app:extra_result}, we report the main results under a different temperature $T=0.5$. We also report experiments under larger LLMs including Llama-3.1-70B-Instruct and Qwen-2.5-32B-Instruct in Table~\ref{tab:larger_model}. Both additional results draw consistent conclusions.
In our method, random perturbation (Rand.~Pert.), the default number of noise sampling is $l = 20$, and the default standard deviation $\sigma = 0.001$. For adversarial perturbation (Adv.~Pert.), the default perturbation scale constant $\alpha = 0.0001$. We analyze these hyperparameters in Section~\ref{sec:ablation}.

\textbf{{Evaluation Pipeline.}} To test the utility of these UQ metrics for intermediate uncertainty, we are trying to answer the following question: \textbf{\textit{Can one identify the uncertain (problematic) intermediate steps following the guidance of these metrics?}} For this purpose, we generate the complete model response and only concentrate our analysis on those responses that produce incorrect final answers. For each incorrect response, we identify its earliest reasoning step that an error occurs, which is the most likely point where model uncertainty emerges\footnotemark. Then, we assess whether tokens or steps with high UQ scores overlap with this first wrong step. Formally, we define the first wrong step $\mathbf{r}_w \subseteq \mathbf{r}$ as a sub-string of the whole response $\mathbf{r}$. 

In each response, given a \textbf{token-level} UQ metric $\mathrm{Q}(x_t)$ reflecting the uncertainty of generating $x_t$, we use $\mathrm{Q}^{(k)}$ to denote the $k$-th largest value among the generated tokens. Then, one UQ method is considered to successfully detect the first wrong step if at least one token from $\mathbf{r}_w$ appears among the top-$k$ tokens ranked by UQ score, i.e., $\exists\, x_t \in \mathbf{r}_w~~ \text{s.t.}~~ \mathrm{Q}(x_t) \geq \mathrm{Q}^{(k)}$, suggesting the metric can distinguish the wrong step from the whole text. Thus, we report this successful detection rate over multiple incorrect responses as the evaluation metric. Similarly, we can also obtain the \textbf{sentence-level} UQ metric for our perturbation-based UQ metric simply via summing the scores of all tokens in each sentence. Then, we assess the ratio of that the first wrong sentence $\mathbf{r}_w$ appears among the top-$k$ most uncertain sentences ranked by our sentence-level UQ score.

\footnotetext{There could be multiple wrong steps in one response. However, later wrong steps can be confidently generated based on previous wrong steps, so we focus on the investigating the uncertainty when generating the first wrong step. To find the ``first wrong step'', we query an GPT-5.2 model~\cite{singh2025openai} to read through the answers. A manual examination shows GPT-5.2 query has about 91\% alignment to human evaluation. (prompts included in Figure~\ref{fig:prompts})}\label{footnote_wrong_step}

\textbf{Baselines.}  We compare the proposed method with both token-level and sentence-level baselines. For token-level, there are probability-based methods: Negative Log-Likelihood (NLL), token entropy (Eq.\ref{eq:entropy}), probability margin (Eq.\ref{eq:margin}), and Bayesian-based method TokUR~\cite{zhang2026tokurtokenleveluncertaintyestimation}. For sentence-level, we have probability-based methods CCP~\cite{fadeeva2024fact}, TokenSAR~\cite{duan2023shifting} which are originally proposed for hallucination detection in open-form generations; and Graph Uncertainty~\cite{jiang2024graph} which conducts multi-sampling and assess whether each sentence is consistently produced in different generations.

\begin{figure}[t]
\centering
    \begin{minipage}{0.95\linewidth}
        \includegraphics[width=\linewidth]{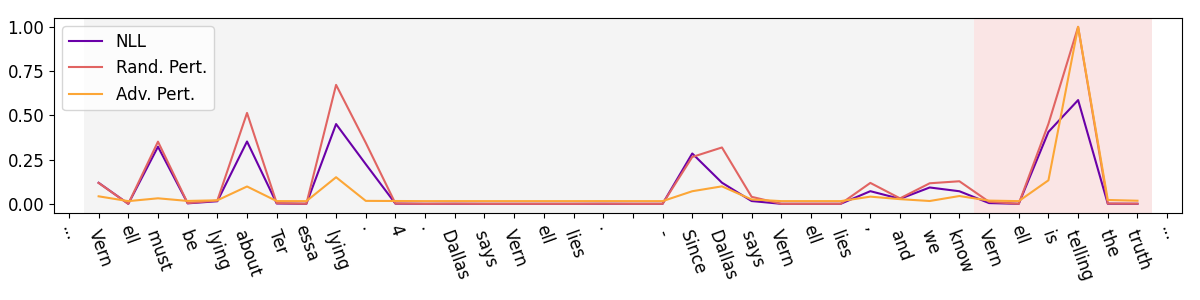}
        {\small \textbf{$\bullet$A successful case in Llama.} The LLM needs to reason who is lying. At first, ``Vernell must be lying about Teressa lying'' is inferred correctly, but later it incorrectly generates ``Vernell is {\textcolor{red}{telling}} the truth''. Our method successfully identifies the critical error token `telling' which should be `lying' instead.}

        \includegraphics[width=\linewidth]{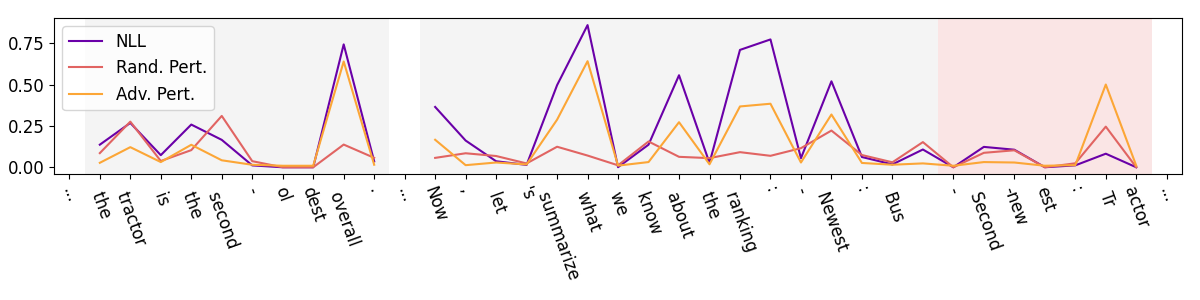}
        {\small \textbf{$\bullet$A failure case in Qwen.} The LLM needs to rank five given vehicles. At first, ``the tractor is the second-oldest'' is inferred correctly by the model, but later it incorrectly concludes ``Second newest: {\textcolor{red}{Tr}}actor''.}
    \end{minipage}
    \caption{Case studies on logical reasoning task. Each uncertainty score is min-max normalized over the complete output to fit the figure, and only part of tokens are shown. Red background indicates a region of tokens where critical error occurs.}
    \label{fig:main_case}
\end{figure}

\vspace{-0.1cm}
\subsection{Experiment Result}\label{sec:experiment_results}

Table~\ref{tab:math} and Table~\ref{tab:bbh} report the successful detection rate in both token-level and sentence-level scope for MATH (300 wrong responses) and Big-Bench~Hard (500 wrong responses) respectively. From the results, we can see the perturbation-based methods have clear and consistent advantage over all baselines. Comparing the two perturbation-based methods, Adv.~Pert.~demonstrates better stability and stronger performance than Rand.~Pert. However, we observe even for the best-performing metric, there are a substantial number of failure cases across each subset. Through manual inspection, we conjecture that these failures may stem from possible reasons: (1) the model can remain highly confident even when making errors; (2) a single response may contain multiple uncertainty points, and the first erroneous step does not consistently receive the highest scores; (3) most importantly, our metric only reflects uncertainty in the pathway choices (see the theoretical analysis in Section~\ref{sec:geo}), but such uncertainty does not necessarily lead to incorrectness.

For baselines, we notice that TokUR yields generally better result than other token-level baselines, suggesting weight perturbation may also be effective for this task. Besides, the successful rate of Graph Uncertainty (multiple-sampling) is usually low, as it may struggle to pinpoint the origin of the errors (it will also flag the later steps after the first wrong sentence). Comparing different LLMs, we find the performance of our methods in Qwen is relatively lower comparing to other LLMs. In our following discussion in case studies, we find this may be due to that Qwen inherently has a higher variability in word/pathway selection compare to other models.

\textbf{{Case Studies.}} In Figure~\ref{fig:main_case}, we provide two representative examples in BBH, one successful case by Llama and one failure case by Qwen. As shown in \textbf{Successful Case 1}~(Llama), the model wrongly recalls a previous conclusion, and claims ``Vernell is telling the truth''. The perturbation-based metrics can precisely locate the token ``telling'' where the model begins to deviate.
In \textbf{Failure Case 2}~(Qwen), the model incorrectly infers “second newest is tractor” from prior fact “the tractor is the second oldest.” Although perturbation-based metrics assign a higher score to the first incorrect token “Tr” than its neighbors, other tokens like “overall” and “what” surpass this token. Based on our observation, we find Qwen might inherently have a higher variability in word choice or reasoning pathway selection (similar observations also drawn in \cite{halim2025studythinkingpatternslarge, jang2025rcscorequantifyingresponseconsistency}). For example, following the content “let’s summarize” (in middle of Case 2), the model has a surged UQ score for the token ``what''. This may indicate multiple possible continuations but not a reasoning error. Thus, our method cannot distinguish reasoning uncertainty from such variability, resulting a decreased detection performance. More cases could be seen in Figure~\ref{fig:extra_math_case} and Figure~\ref{fig:extra_logic_case}.

\textbf{How about Correct Answers?} We visualize the UQ metrics for a few cases of correct answers from Llama-3.1-8B-Instruct under BBH dataset in Figure~\ref{fig:correct_incorrect_curves} in Appendix~\ref{app:extra_result}. We observe that UQ curves for correct answers are mostly maintained at a lower level, but uncertainty surges also exist. These surges may correspond to stages where the model experiences elevated uncertainty while still maintaining a correct reasoning trajectory. In Table~\ref{tab:final_answer} of Appendix~\ref{app:extra_result}, we further evaluate the ability of these metrics for response-level uncertainty quantification. However, our method underperforms compared to the strong baselines such as multiple sampling in this setting, suggesting that it is better suited for intermediate-step UQ rather than response-level UQ.

\begin{wrapfigure}{r}{0.5\textwidth}
    \centering
        \vspace{-0.6cm}
    \includegraphics[width=\linewidth]{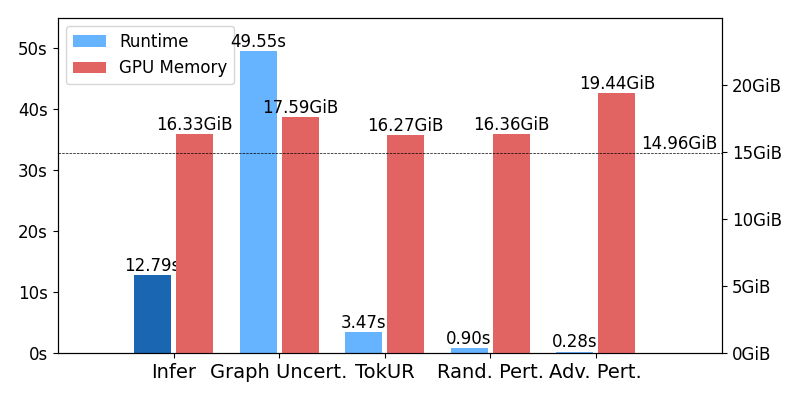}
    \vspace{-0.8cm}
    \caption{\small Time and memory efficiency comparison. 
    }
    \vspace{-0.4cm}
    \label{fig:efficiency}
\end{wrapfigure}

\vspace{-0.2cm}
\subsection{How is the Efficiency?}\label{sec:efficiency}
\vspace{-0.2cm}
 
In Figure~\ref{fig:efficiency}, we report the additional computational time beyond the inference time per generation (blue), as well as the total memory required for conducting UQ (red), under the Llama-3.1-8B-Instruct model on a single NVIDIA GH200 GPU. The results show that perturbation-based metrics are indeed efficient. For example, obtaining uncertainty scores using Rand.~Pert., with 20 iterations of noise sampling and re-inference, requires only approximately 0.90 seconds of additional time per case. In contrast, Adv.~Pert. (which requires one backward and one forward pass) takes approximately 0.28 seconds per case. In terms of memory, due to backpropagation, Adv.~Pert. requires on average an additional 3GB of memory to compute gradients.

\vspace{-0.3cm}
\section{Additional Studies}
\vspace{-0.2cm}
In this section, we perform additional studies to further understand our proposed method. In Section~\ref{sec:hallucination}, we explore the ability of perturbation-based method in factual hallucination detection. In Section~\ref{sec:ablation}, we perform ablation study to the hyper-parameters of our method.

\vspace{-0.2cm}
\subsection{Hallucination Detection}\label{sec:hallucination}
\vspace{-0.2cm}

\begin{wraptable}{r}{0.6\textwidth}
    \centering
    \vspace{-1.2cm}
    \caption{\small Result on FactScore for Hallucination Detection.}
    \vspace{0.1cm}
    \resizebox{\linewidth}{!}{
    \begin{tabular}{c|cc|cc|cc}
    \hline \hline
    \textbf{FactScore} & \multicolumn{2}{c|}{\textbf{Llama}}
    & \multicolumn{2}{c|}{\textbf{Qwen}}
    & \multicolumn{2}{c}{\textbf{Mistral}} \\
    \hline
    &  AUROC  &  AP & AUROC  &  AP  &  AUROC  &  AP   \\
    NLL  &  0.634 & 0.605  &  0.636 & 0.769  &  0.667 & 0.746 \\
    Entropy  &  0.632 & 0.585  &  0.629 & 0.759  &  0.667 & 0.743 \\
    Margin  &  0.607 & 0.562  &  0.607 & 0.742  &  0.650 & 0.713 \\
    CCP  &  0.625 & 0.601  &  0.644 & 0.770  &  0.663 & 0.741 \\
    TokenSAR  &  0.638 & 0.589  &  0.623 & 0.749  &  0.640 & 0.713 \\
    Graph Uncert.  &  \textbf{0.738} & 0.630  &  \textbf{0.876} & \textbf{0.917}  &  \textbf{0.868} & \textbf{0.912} \\
    \hline
    Rand. Pert.  &  0.674 & \textbf{0.643}  &  0.590 & 0.742  &  0.555 & 0.628 \\
    Adv. Pert.  &  0.665 & 0.623  &  0.551 & 0.709  &  0.571 & 0.653 \\
    \hline \hline
    \end{tabular}
    }
    \label{tab:factscore}
    \vspace{-0.5cm}
\end{wraptable}

We investigate the FactScore benchmark \cite{min2023factscore} to evaluate the effectiveness of UQ metrics in identifying factual errors (hallucinations) when LLMs are prompted to compose celebrity biographies. In the setting, each LLM response is decomposed into individual factual statements. Then, we compute the sentence-wise UQ value for each statement, which serve as an indicator of the model’s uncertainty regarding that statement. Table~\ref{tab:factscore} reports the AUROC and Average Precision (AP, failure cases are treated as positive) of hallucination detection capabilities with respect to the presented UQ metrics. The results show that Graph Uncertainty (multiple-sampling) performs the best among most settings, while perturbation-based metrics do not perform as strongly on this task.

We argue that the causes and error patterns of factual hallucination can differ fundamentally from those of reasoning errors. Specifically, prior work~\cite{maynez2020faithfulness} shows that factual hallucinations in LLMs often arise when generated content is weakly grounded in the source context. For example, through human evaluation, they find that approximately 60\%–70\% of cases involve extrinsic hallucinations that ignore the input context. In such cases, perturbations applied to the context that are weakly connected to the output may be insufficient to induce noticeable probability changes, making hallucinations difficult to detect under perturbation-based metrics. On the other hand, other work~\cite{sun2025llmshallucinateconnectingdots} suggests that hallucinations can also be driven by spurious correlations, where elements in the prefix strongly and confidently associate with incorrect outputs. We illustrate this phenomenon in Figure~\ref{fig:fs_case}. The model first correctly states that “Billy Snedden was an Australian politician,” but then incorrectly claims that “he was born in Melbourne,” likely due to a spurious association between “Australian politician” and “Melbourne.” It further incorrectly states that “he studied at the University of Melbourne,” again reflecting such correlations. In this scenario, perturbation-based methods tend to exhibit lower uncertainty, as these correlations remain strong and stable even under perturbations. Overall, these observations suggest hallucination errors can be fundamentally different from reasoning errors.

\vspace{-0.3cm}

\subsection{Ablation Studies}\label{sec:ablation}
\vspace{-0.2cm}

\begin{wrapfigure}{r}{0.45\textwidth}
    \centering
    \vspace{-0.8cm}
    \includegraphics[width=\linewidth]{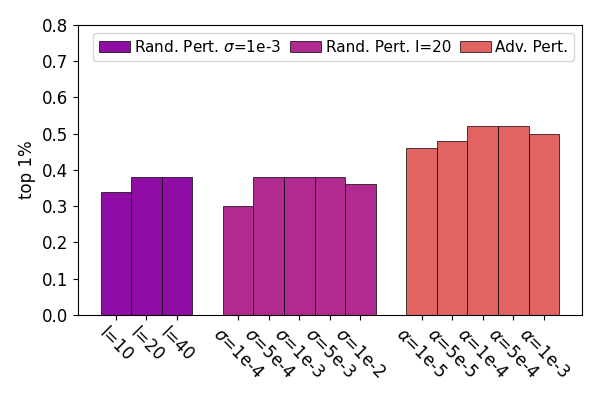}
    \vspace{-0.9cm}
    \caption{\small Result under various hyper-parameters.}
    \label{fig:ablation}
        \vspace{-0.4cm}
\end{wrapfigure}

In our experiments, we set the hyperparameters as follows by default: (1) the number of noise sampling in Rand.~Pert.~to be $l = 20$ times. (2) The standard deviation of the noise $\sigma$ in Rand.~Pert.~to be 0.001. (3)
The intensity value $\alpha$ in Adv.~Pert.~to be 0.0001. We see the value of $l$ as trade-off in efficiency, and we hope $\sigma$ and $\alpha$ are chosen to be sufficiently small. In this subsection, we conduct ablation study to see how different values of these hyperparameters affect the performance. The experiment is in BBH logical\_deduction\_five\_objects subset on Llama, the result is shown in Figure~\ref{fig:ablation}. For the hyper-parameter $l$, there is minor increase from $l=10$ to $l=20$, and almost no increase from $l=20$ to $l=40$. This may suggest the marginal gain for increasing $l$ is diminishing. For noise scale $\sigma$, we find that our selected $\sigma=0.001$ as well as its surrounding values serve as reasonable hyperparameter values. Similar observation also exists in $\alpha=0.0001$ for Adv.~Pert. These results confirm that our selected hyperparameters in the main results are suitable, and the performance is not very sensitive to the selection under a reasonable range.

\begin{figure}[t]
\centering
    \begin{minipage}{0.95\linewidth}
        \includegraphics[width=\linewidth]{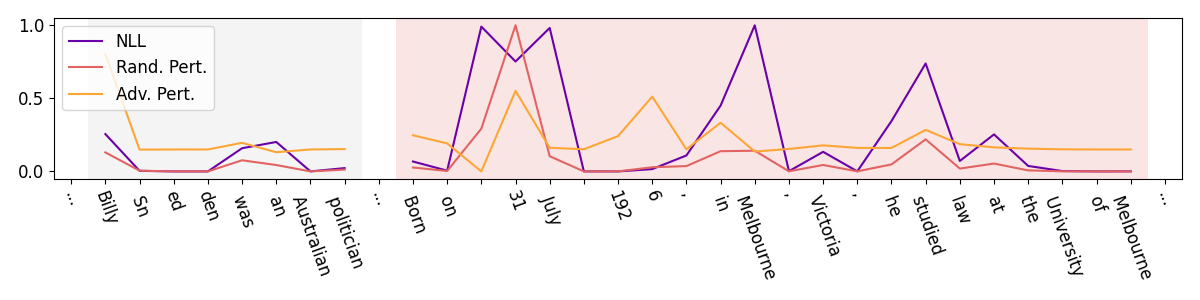}
        {\small \textbf{$\bullet$Not identified hallucination due to spurious correlation.} The LLM is required to output a paragraph of bio for ``Billy Snedden''. The model correctly output ``Billy Snedden was an Australian politician'', but incorrectly output ``he was born in \textcolor{red}{Melbourne}'' and ``he studied law at the University of \textcolor{red}{Melbourne}''.}

    \end{minipage}
    \caption{Case study on FactScore. Each uncertainty score is min-max normalized over the complete output to fit the figure, and only part of tokens are shown.}
    \label{fig:fs_case}
    \vspace{-0.5cm}
\end{figure}

\vspace{-0.3cm}
\section{Conclusion and Limitation}
\vspace{-0.3cm}

Unlike prior uncertainty quantification approaches for LLM reasoning tasks that are largely confined to assessing confidence in the final answer, our method concentrates on uncertainty quantification to the intermediate reasoning process. We propose perturbation-based token-level uncertainty quantification methods that deliver better efficacy to detect erroneous steps in LLM reasoning. These methods also simultaneously enjoy high efficiency. 
By producing token-level uncertainty signals, it can pinpoint which tokens are most responsible for errors in the reasoning trajectory, hopefully enabling finer-grained, interpretable error attribution or benefiting annotation for RLHF. 

However, the proposed methods still have limitations. They may fail to detect hallucination errors that occur during generation, as such errors could follow inherently different patterns from reasoning errors. Moreover, our method may not reliably disentangle ``uncertainty regarding the correctness of a step'' from ``uncertainty arising from the selection among multiple competing reasoning pathways''. Therefore, stronger uncertainty signals remain to be explored and leveraged to enable more reliable backtracking and localization of tokens that trigger reasoning failures.

\bibliography{reference}
\bibliographystyle{unsrt}

\appendix
\section{Detailed Analytical Insight}\label{sec:proof}

In this section, we provide detailed formal analysis that try to explain the underlying mechanism of our proposed methods. We will first generally show that the perturbation sensitivity is approximating gradient norm. Second, as inspired by \cite{moosavidezfooli2016deepfool}, we will show that gradient norm reflects decision margin.

\subsection{Perturbation Sensitivity Approximate Gradient Norm}\label{sec:proof_1}

For function $f_t(\textbf{H})$, since it is the log probability of an LLM output, it is differentiable. We assume it could also be approximated as locally linear, which is convenient for us to perform approximation by first order Taylor expansion.

\begin{proposition}[\textbf{Random perturbation sensitivity approximates square of $l_2$ gradient norm.}]
    \begin{equation}
    \mathrm{Pert.}(x_t) \approx \sigma^2 \lVert \delta \rVert_2^2
    \end{equation}
\end{proposition}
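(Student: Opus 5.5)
The plan is a first-order Taylor expansion of $f_t$ around $\mathbf{H}$, which is legitimate under the stated local-linearity assumption and the smallness of $\sigma$. Write the perturbation $\boldsymbol{\Delta}$ as a single vector in $\mathbb{R}^{D}$, stacking all $\boldsymbol{\delta}_i$. Since $f_t$ depends only on $\mathbf{H}_{1:t-1}$, the coordinates of $\boldsymbol{\Delta}$ for later tokens simply have zero gradient, so it is harmless to regard $\delta=\nabla_{\mathbf{H}} f_t(\mathbf{H})$ as a vector in the full space. Then
\[
f_t(\mathbf{H}+\boldsymbol{\Delta}) = f_t(\mathbf{H}) + \delta^\top \boldsymbol{\Delta} + R(\boldsymbol{\Delta}),
\]
with remainder $R(\boldsymbol{\Delta}) = O(\lVert\boldsymbol{\Delta}\rVert_2^2)$, assuming $f_t$ is twice differentiable with bounded Hessian near $\mathbf{H}$.

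Next I compute the variance of the linear part. The term $f_t(\mathbf{H})$ is deterministic, so it drops out of the variance. Since $\boldsymbol{\Delta}\sim N(0,\sigma^2 I_D)$, the scalar $\delta^\top\boldsymbol{\Delta}$ is Gaussian with mean $0$ and variance $\delta^\top(\sigma^2 I)\delta=\sigma^2\lVert\delta\rVert_2^2$. This gives the leading term $\mathrm{Var}_{\boldsymbol{\Delta}}[f_t(\mathbf{H}+\boldsymbol{\Delta})]\approx\sigma^2\lVert\delta\rVert_2^2$.

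The only real work, and the step I expect to be the main obstacle, is controlling the remainder so that "$\approx$" means the error is of lower order. Expanding the variance of a sum gives
\[
\mathrm{Var}(\delta^\top\boldsymbol{\Delta}+R) = \sigma^2\lVert\delta\rVert_2^2 + 2\,\mathrm{Cov}(\delta^\top\boldsymbol{\Delta},R) + \mathrm{Var}(R).
\]
I would bound the two correction terms using the Hessian bound $\lVert\nabla^2 f_t\rVert\le M$, which gives $|R|\le \tfrac{M}{2}\lVert\boldsymbol{\Delta}\rVert_2^2$. Together with Gaussian moments $\mathbb{E}\lVert\boldsymbol{\Delta}\rVert_2^4=O(D^2\sigma^4)$ and Cauchy--Schwarz, this yields
\[
|\mathrm{Cov}| \le \sigma\lVert\delta\rVert_2 \cdot O(MD\sigma^2)
\qquad\text{and}\qquad
\mathrm{Var}(R)=O(M^2D^2\sigma^4).
\]
Hence the relative error is $O(MD\sigma/\lVert\delta\rVert_2)$, which vanishes as $\sigma\to0$. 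One remark: in the Gaussian case the covariance term is in fact exactly zero at second order, because it involves odd Gaussian moments. So the correction is really $O(\sigma^4)$, and the paper's stated hypotheses can be taken at face value.

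Finally, I would note that in practice the variance is estimated empirically from $l$ samples, so the statement holds in expectation over the sampled perturbations (up to sampling error). The heuristic "local linearity" assumption is exactly what licenses dropping $R$.
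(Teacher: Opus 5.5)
Your proposal matches the paper's proof: a first-order Taylor expansion of $f_t$ at $\mathbf{H}$, then $\mathrm{Var}_{\boldsymbol{\Delta}}[\delta^\top\boldsymbol{\Delta}]=\delta^\top(\sigma^2 I)\delta=\sigma^2\lVert\delta\rVert_2^2$; your remainder estimate is a refinement the paper omits, since it simply drops higher-order terms by local linearity. In that extra part, because Gaussian $\boldsymbol{\Delta}$ has unbounded support, the bound $|R|\le\tfrac{M}{2}\lVert\boldsymbol{\Delta}\rVert_2^2$ needs a global Hessian bound (or a split on $\{\lVert\boldsymbol{\Delta}\rVert_2\le r\}$ plus a Gaussian tail estimate) rather than one ``near $\mathbf{H}$'', and your sharpened $O(\sigma^4)$ claim also needs third-derivative control (e.g., a Lipschitz Hessian) to bound the cubic part of the covariance.
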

Let $f(\mathbf{H}) = \log \Pr (x_t | \mathbf{H})$ with $\mathbf{H} = (\mathbf{h}_1, \ldots, \mathbf{h}_{t-1})$ and $\delta=  \nabla_\mathbf{H} f(\mathbf{H})$, with a small random perturbation noise $\boldsymbol{\Delta} \sim N(0, \sigma^2 I)$, random perturbation could be approximated by first order Taylor expansion and expressed as the square of $l_2$ gradient norm:

\begin{align*}
    \mathrm{Pert.}(x_t) &= \mathrm{Var}_{\boldsymbol{\Delta}}
\!\big[ f(\mathbf{H} + \boldsymbol{\Delta}) \big] \\
    &\approx \mathrm{Var}_{\boldsymbol{\Delta}}
\!\big[ f(\mathbf{H}) + \nabla_\mathbf{H} f(\mathbf{H}) ^T \boldsymbol{\Delta} \big] \\
    &= \nabla_\mathbf{H} f(\mathbf{H}) ^T \Sigma \nabla_\mathbf{H} f(\mathbf{H}) \\
    &= \sigma^2 \lVert \nabla_\mathbf{H} f(\mathbf{H}) \rVert_2^2 \\
    &= \sigma^2 \lVert \delta \rVert_2^2
\end{align*}

\begin{proposition}[\textbf{Adversarial perturbation sensitivity approximates $l_1$ gradient norm.}]
    \begin{equation}
    \mathrm{Pert.}(x_t) \approx \alpha \lVert \delta \rVert_1
    \end{equation}
\end{proposition}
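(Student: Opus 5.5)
We follow the same template as the random-perturbation case: a first-order Taylor expansion of $f(\mathbf{H}) = \log \Pr(x_t \mid \mathbf{H})$ around the unperturbed embeddings, under the local-linearity assumption already stated at the start of Appendix~\ref{sec:proof_1}. Write $\delta = \nabla_\mathbf{H} f(\mathbf{H})$ and let the adversarial perturbation be $\boldsymbol{\eta} = -\alpha\,\mathrm{sgn}(\delta)$. This is the per-token form used in Proposition~\ref{thm:prop}; the discrepancy with Eq.~\ref{eq:adv_h}, which uses the gradient of the summed log-likelihood, is set aside as in the footnote.

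The key steps, in order, are as follows.
\begin{enumerate}
\item Expand the perturbed value:
\begin{equation*}
f(\mathbf{H} - \alpha\,\mathrm{sgn}(\delta)) \approx f(\mathbf{H}) - \alpha\,\delta^T \mathrm{sgn}(\delta).
\end{equation*}
\item Substitute this into the definition $\mathrm{Pert.}(x_t) = f(\mathbf{H}) - f(\mathbf{H} - \alpha\,\mathrm{sgn}(\delta))$. The zeroth-order terms cancel, leaving $\mathrm{Pert.}(x_t) \approx \alpha\,\delta^T \mathrm{sgn}(\delta)$.
\item Use the coordinatewise identity $\delta_i\,\mathrm{sgn}(\delta_i) = |\delta_i|$. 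This holds also when $\delta_i = 0$, with the convention $\mathrm{sgn}(0)=0$. It gives $\delta^T\mathrm{sgn}(\delta) = \sum_i |\delta_i| = \lVert \delta \rVert_1$, hence $\mathrm{Pert.}(x_t) \approx \alpha \lVert \delta \rVert_1$.
\end{enumerate}

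To make the symbol $\approx$ meaningful, we would note that the remainder of the expansion is $O(\alpha^2 D)$. Indeed, $\lVert \boldsymbol{\eta} \rVert_2 = \alpha\sqrt{D}$, so with a bounded local Hessian the second-order term is at most $\tfrac{1}{2}\lVert \nabla^2 f \rVert\,\alpha^2 D$. The approximation is therefore accurate when this is negligible compared with $\alpha \lVert \delta \rVert_1$, i.e.\ for sufficiently small $\alpha$.

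The main obstacle is not the algebra but justifying this regime. The $\ell_\infty$-ball perturbation has $\ell_2$ norm growing like $\sqrt{D}$, so "sufficiently small" must be read relative to the dimension. We would simply state the condition $\alpha\sqrt{D}\,\lVert\nabla^2 f\rVert \ll \lVert\delta\rVert_1/\sqrt{D}$ as part of the local-linearity assumption rather than try to verify it for a real LLM.

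Finally, we would remark on the sign. Since $\boldsymbol{\eta}$ is the steepest-descent direction of the linearization within the $\ell_\infty$ ball of radius $\alpha$, the linearized decrease $\alpha\lVert\delta\rVert_1$ is the maximal decrease over that ball. This follows from Hölder duality between $\ell_\infty$ and $\ell_1$, and it explains why the $\ell_1$ norm appears here instead of the $\ell_2$ norm.
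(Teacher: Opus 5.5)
Your proposal is correct and follows the paper's argument: a first-order Taylor expansion of $f$ at $\mathbf{H}$ along $-\alpha\,\mathrm{sgn}(\delta)$, the zeroth-order terms cancelling, and then the identity $\delta^T\mathrm{sgn}(\delta)=\lVert\delta\rVert_1$. Your extra remarks (the $O(\alpha^2 D\,\lVert\nabla^2 f\rVert)$ remainder bound, the resulting dimension-dependent smallness condition on $\alpha$, and the H\"older-duality reading of the $\ell_1$ norm) go beyond what the paper writes, but they do not change the route.
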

With $\alpha$ as a small constant scale of the perturbation, the adversarial perturbation sensitivity could be also approximated using first order Taylor expansion and expressed as the $l_1$ gradient norm scaled by $\alpha$:

\begin{align*}
    \mathrm{Pert.}(x_t) &= f(\mathbf{H}) - f \Big(\mathbf{H} - \alpha  \cdot\text{sgn} \nabla_\mathbf{H} f(\mathbf{H})\Big) \\
    &\approx \alpha \nabla_\mathbf{H} f(\mathbf{H})^T \text{sgn} \nabla_\mathbf{H} f(\mathbf{H}) \\
    &= \alpha \lVert \delta \rVert_1
\end{align*}

\subsection{Gradient Norm Reflects Decision Margin}\label{sec:proof_2}

\begin{proposition}[\textbf{Rival token exists.}]
When generating $x_t$, there must exist at least one rival token $x_t^* \neq x_t$ with $f^*(\mathbf{H}) = \log \Pr (x_t^* | \mathbf{H})$ and $\delta^*=  \nabla_\mathbf{H} f^*(\mathbf{H})$, such that $\delta^T \delta^* < 0$. (Restated from Proposition~\ref{def:rival_token}.)
\end{proposition}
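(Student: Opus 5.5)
The plan is to use the fact that the next-token probabilities always sum to one. Differentiating that constraint shows that the gradients of all the log-probabilities, weighted by the probabilities themselves, cancel. The gradient $\delta$ of the sampled token must then be balanced by gradients pointing against it, and at least one of those is our rival.

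First, record the structural facts. For fixed $\mathbf{H}=\mathbf{H}_{1:t-1}$, the LLM outputs a softmax distribution over $\mathcal{V}$. Write $p_v(\mathbf{H}) = \Pr(v\mid \mathbf{H})$ and $f_v = \log p_v$. Then every $p_v(\mathbf{H})>0$, each $p_v$ is differentiable in $\mathbf{H}$ (as the paper already assumes for $f_t$), and $\sum_{v\in\mathcal{V}} p_v(\mathbf{H}) \equiv 1$ identically in $\mathbf{H}$.

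Second, differentiate this identity with respect to $\mathbf{H}$. Using $\nabla p_v = p_v \nabla f_v$, we get
\begin{equation*}
\sum_{v\in\mathcal{V}} p_v(\mathbf{H})\, \nabla_{\mathbf{H}} f_v(\mathbf{H}) = 0 .
\end{equation*}
Separating the term for the generated token $x_t$ gives
\begin{equation*}
p_{x_t}\,\delta + \sum_{v\neq x_t} p_v\, \nabla_{\mathbf{H}} f_v = 0 .
\end{equation*}

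Third, take the inner product of this identity with $\delta$:
\begin{equation*}
\sum_{v\neq x_t} p_v\, \delta^T \nabla_{\mathbf{H}} f_v = -\,p_{x_t}\,\lVert\delta\rVert_2^2 .
\end{equation*}
If $\delta\neq 0$, the right-hand side is strictly negative, because $p_{x_t}>0$. The left-hand side is a sum of terms with strictly positive weights $p_v$, so at least one term must satisfy $\delta^T\nabla_{\mathbf{H}} f_v<0$. Take $x_t^*$ to be any such $v$; then $\delta^*=\nabla_{\mathbf{H}} f_v$ and $\delta^T\delta^*<0$, which is the claim. A natural concrete choice is the $v$ minimizing $\delta^T\nabla f_v$. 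The same identity also shows that the $p$-weighted average of $\delta^T\delta_v$ over $v\neq x_t$ equals $-p_{x_t}\lVert\delta\rVert_2^2/(1-p_{x_t})$, which gives a quantitative strength for the rival.

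The only real obstacle is the degenerate case $\delta = 0$. There, every inner product $\delta^T\delta^*$ vanishes, and the strict inequality cannot hold for any token. I would handle this by stating explicitly the nondegeneracy assumption $\nabla_{\mathbf{H}} f_t(\mathbf{H})\neq 0$. This case is also irrelevant for the paper's purposes: by Proposition~\ref{thm:prop}, both perturbation scores are then approximately zero, so the token would never be flagged as uncertain. The remaining hypotheses (strict positivity of softmax outputs and differentiability) are immediate from the architecture.
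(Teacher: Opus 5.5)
Your proof is correct and uses the same fact as the paper, namely that the next-token probabilities sum to one, but you apply it exactly where the paper applies it approximately.

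The paper takes a finite step. For small $a>0$, moving to $\mathbf{H}-a\delta$ lowers $\Pr(x_t\mid\mathbf{H})$, so by conservation of mass some $x_t^*\neq x_t$ gains probability. A first-order expansion $f^*(\mathbf{H}-a\delta)-f^*(\mathbf{H})\approx -a\,\delta^T\delta^*$ is then used to read off $\delta^T\delta^*<0$. That route is intuitive and mirrors the paper's picture of a rival whose probability surges under perturbation, but it is heuristic. The token it singles out may gain mass only at second order, in which case a limiting argument gives just $\delta^T\delta^*\le 0$ for that token. Its opening step also tacitly needs $\delta\neq 0$.

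You instead differentiate the constraint to get the exact identity $\sum_{v}p_v\nabla_{\mathbf{H}}f_v=0$ and pair it with $\delta$. This gives a strictly negative sum with strictly positive weights, which forces a rival satisfying the strict inequality, with no Taylor approximation and no step size. Your route also does two further things:
\begin{itemize}
\item It makes the hypothesis $\delta\neq 0$ explicit; without it the statement is false.
\item It gives a quantitative bonus: the $p$-weighted mean of $\delta^T\delta_v$ over $v\neq x_t$ equals $-p_{x_t}\lVert\delta\rVert_2^2/(1-p_{x_t})$.
\end{itemize}

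In Theorem~\ref{thm:norm_approx_boundary} the paper only needs $\delta^T\delta^*\le 0$ together with $\delta\neq 0$, to get $\lVert\delta-\delta^*\rVert_2\ge\lVert\delta\rVert_2>0$. Both arguments therefore serve that purpose, but yours is the one that establishes the proposition as stated.
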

\begin{proof}
    Given a sufficiently small positive $a$, we have $\Pr(x_t | \mathbf{H} - a \delta) < \Pr(x_t | \mathbf{H})$. Then there exist at least one token such that $x_t^* \neq x_t$ and $\Pr(x_t^* | \mathbf{H} - a \delta) > \Pr(x_t^* | \mathbf{H})$ as probabilities sum up to one. It follows that $f^*(\mathbf{H} - a \delta) > f^*(\mathbf{H})$. Considering $f^*(\mathbf{H} - a \delta) - f^*(\mathbf{H}) \approx - \delta^T \nabla_\mathbf{H} f^*(\mathbf{H})$. Therefore the gradient at this token $\delta^*=  \nabla_\mathbf{H} f^*(\mathbf{H})$ is negatively related  to $\delta$ i.e. $\delta^T \delta^* < 0$.
\end{proof}

For any rival token $x_t^*$ satisfying $\delta^T \delta^* < 0$, we have margin function $g(\mathbf{H}) = f(\mathbf{H}) - f^*(\mathbf{H})$. If we use $\mathbf{H}_b$ to describe the point on the boundary that is the closest to the current input $\mathbf{H}$, as $\mathbf{H}_b = \arg\min_{\mathbf{H}_m} \{\mathbf{H}_m : g(\mathbf{H}_m) = 0\}$, then the distance of $\mathbf{H}$ to the decision boundary can be written as $\gamma (\mathbf{H}) = \lVert \mathbf{H}_b - \mathbf{H} \rVert$.

In order to prove the main proposition, we need to first make an assumption:
\begin{assumption}
    The margin function $g(\mathbf{H})$ can be approximated as linear between $\mathbf{H}$ and $\mathbf{H}_b$.
    \label{assumption:margin_function_linear}
\end{assumption}

Then we can start with $\lvert g(\mathbf{H}) \rvert$:
\begin{align*}
    \lvert g(\mathbf{H}) \rvert &= \lvert g(\mathbf{H}) - g(\mathbf{H}_b) \rvert \\
    &\approx \lvert \nabla_\mathbf{H} g(\mathbf{H}) ^T  (\mathbf{H}_b - \mathbf{H})\rvert \\
    &\approx \lVert \nabla_\mathbf{H} g(\mathbf{H}) \rVert_2 \lVert (\mathbf{H}_b - \mathbf{H}) \rVert_2
\end{align*}
The first equality holds because $g(\mathbf{H}_b) = 0$. The second approximation holds because of the Assumption~\ref{assumption:margin_function_linear} and the first order Taylor expansion. The third equation holds because the Assumption~\ref{assumption:margin_function_linear} gives that $\nabla_\mathbf{H} g(\mathbf{H})$ and $(\mathbf{H}_b - \mathbf{H})$ are approximately parallel, satisfying the equal condition of Hölder's inequality.

Next, because $\delta^T \delta^* < 0$, then $\lVert \nabla_\mathbf{H} g(\mathbf{H}) \rVert_2 = \lVert \delta - \delta^* \rVert_2 \neq 0$, replacing $\lVert (\mathbf{H}_b - \mathbf{H}) \rVert_2$ with $\gamma (\mathbf{H})$:
\begin{align*}
    \lvert g(\mathbf{H}) \rvert &\approx \lVert \nabla_\mathbf{H} g(\mathbf{H}) \rVert_2 \gamma (\mathbf{H}) \\
    \gamma (\mathbf{H}) &\approx \frac{\lvert g(\mathbf{H}) \rvert}{\lVert \nabla_\mathbf{H} g(\mathbf{H}) \rVert_2}
\end{align*}

We have $\nabla_\mathbf{H} g(\mathbf{H}) = \delta - \delta^*$. From Proposition~\ref{def:rival_token}, we have $\delta^T \delta^* < 0$, which implies $\lVert \delta - \delta^* \rVert_2 \geq \lVert \delta \rVert_2 \geq \frac{\lVert \delta \rVert_1}{\sqrt{D}}$ ($D$ denotes the dimension of $\delta$) and $\lVert \delta \rVert_2 \neq 0$, we finally have:

\begin{equation}
    \gamma (\mathbf{H}) \approx \frac{\lvert g(\mathbf{H}) \rvert}{\lVert \nabla_\mathbf{H} g(\mathbf{H}) \rVert_2} = \frac{\lvert g(\mathbf{H}) \rvert}{\lVert \delta - \delta^* \rVert_2} \leq \frac{\lvert g(\mathbf{H}) \rvert}{\lVert \delta \rVert_2} \leq \frac{\sqrt{D} \lvert g(\mathbf{H}) \rvert}{\lVert \delta \rVert_1}
\end{equation}

This equation provide insight that our proposed method is approximately lower bounding the margin, more rigorously, $\mathrm{Pert.}(x_t) \lessapprox \frac{\sigma^2 g^2(\mathbf{H})}{\gamma^2 (\mathbf{H})}$ for random perturbation or $\mathrm{Pert.}(x_t) \lessapprox \frac{\alpha \sqrt{D} g(\mathbf{H})}{\gamma (\mathbf{H})}$ for adversarial perturbation. We assume $g(\mathbf{H})$ is controlled $\lvert g(\mathbf{H}) \rvert \leq C$, we have $\mathrm{Pert.}(x_t) \lessapprox \frac{\sigma^2 C^2}{\gamma^2 (\mathbf{H})}$ for random perturbation, or $\mathrm{Pert.}(x_t) \lessapprox \frac{\alpha \sqrt{D} C}{\gamma (\mathbf{H})}$ for adversarial perturbation. That is, for any alternative rival token satisfying the above requirement, our proposed method reflects the decision margin between current token and this rival token.

\section{Detailed Experiment Setup}\label{sec:exp_detail}

In this section we provide more detailed information about our experiment in Section~\ref{sec:main_exp}. Specifically, we will show our subset selection for the benchmarks, and we will describe more about each of the baselines.

\textbf{{Benchmarks.}}
We focus our experiments on math reasoning and logical reasoning tasks, and select one benchmark for each type of reasoning task respectively. For math reasoning, we consider test set of 6 subsets under MATH~\cite{hendrycksmath2021} without graphics: `algebra', `counting\_and\_probability', `number\_theory', `precalculus', `intermediate\_algebra' and `prealgebra'. For logical reasoning, we consider 10 subsets in BIG-Bench~Hard~(BBH)\cite{suzgun2022challenging} that mostly require logical deduction: `logical\_deduction\_three\_objects', `logical\_deduction\_five\_objects', `logical\_deduction\_seven\_objects', `tracking\_shuffled\_objects\_three\_objects', `tracking\_shuffled\_objects\_five\_objects', `tracking\_shuffled\_objects\_seven\_objects', `web\_of\_lies', `formal\_fallacies', `navigate', `temporal\_sequences' and `reasoning\_about\_colored\_objects'.

\textbf{{Baselines.}} In our evaluation, we compare the proposed method with token-level and step-level UQ metrics. For token-level, there are Negative Log-Likelihood (\textbf{NLL}), token entropy (\textbf{Entropy} Eq.\ref{eq:entropy}), probability margin (\textbf{Margin} EQ.\ref{eq:margin}), and \textbf{TokUR}(EU)~\cite{zhang2026tokurtokenleveluncertaintyestimation}. TokUR is a Bayesian-based token-level uncertainty estimator with low-rank weight perturbation, but the original evaluation is primarily conducted after aggregating those token-level uncertainties into response-level scores, rather than evaluating in token-scope or reasoning-step-scope. We present the epistemic uncertainty (EU) version of TokUR because it performs better than TU or AU in our settings. For hyperparameters of TokUR, after exploration, we find perturbation strength $\sigma_q = 0.001$, rank $r' = 8$ and number of samples $M = 2$ is the optimal setting in our main experiment of Section~\ref{sec:main_exp}. For step-level, we have \textbf{CCP}~\cite{fadeeva2024fact}, a probability-based method that removes the impact of tokens with similar meaning and functional words to the uncertainty. As it treats functional words as $\text{CCP}_{\text{word}} = 1$, and it is only quantitatively evaluated in response-level, we will use the sequence aggregated version. \textbf{TokenSAR}~\cite{duan2023shifting} is another probability-based method that weight negative log-likelihood with semantic significancy to predict uncertainty for a token. To perform its essence of "weighted uncertainty" or "shifted uncertainty", we will also use the sequence aggregated version. Graph Uncertainty(\textbf{Graph~Uncert.})~\cite{jiang2024graph} is a Self-Consistency\cite{wang2022self} like multiple sampling based claim-level uncertainty metric, that split sampled responses and their claims into bipartite graph, represent uncertainty using graph features. We will use closeness ($C_C$), as it is presented as the best metric in the original experiment. This method is originally designed for factual claim uncertainty estimation, and it's workflow is combined into FactScore~\cite{min2023factscore} dataset. Therefore, we have made modifications to migrate this method to evaluate reasoning step consistency. Specifically, we use ``sentence-transformers/all-MiniLM-L6-v2''\cite{wang2020minilmdeepselfattentiondistillation} to produce sentence embedding for each step, then use KMeans to cluster the steps into nodes, we will then follow the Graph Uncertainty methodology to obtain the uncertainty scores for each node.

\textbf{Prompts.}
We use a strong online LLM GPT-5.2 for checking reasoning correctness and locating first critical error. The prompts are provided in Figure~\ref{fig:prompts}. The first prompt is for correctness check for the final answer produced by LLM reasoning, since some of the questions are open-form. In this prompt, the GPT is simply asked to compare whether the derived answer is the same with the standard answer provided by the baseline. The last three prompts, we instruct the GPT to first analyze the error reasoning path step by step (by prompt 2 or 3), then we ask it to extract the step that is the first critical error (by prompt 4). Prompt 2 and prompt 3 are for math and logic reasoning respectively, by manual inspection and tuning, we find these prompts performs the job most accurately. The fourth prompt is for extracting the first incorrect step from the previous analysis. After that we will find the best match between the GPT given step to the content inside LLM original reasoning response, and locate the range of tokens corresponding to the first incorrect step.

\textbf{Compute resources.}\label{sec:compute_resources}
Experiments were conducted on NVIDIA GH200 GPUs with 96 GB memory per GPU. Considering average GPU memory cost for our experiment introduced in efficiency Section~\ref{sec:efficiency}, and as a precaution, we often run 2 to 3 experiments in parallel. The total computation for the reported experiments was approximately 300 GPU-hours. Preliminary and failed experiments required approximately an additional 400 GPU-hours.

\section{Extra Explanation for Experiment Result}\label{sec:extra_exp_explain}

\textbf{Low performance of Graph Uncertainty in intermediate step evaluation.}
In Table~\ref{tab:math} and Table~\ref{tab:bbh}, we find the successful detection rate of Graph Uncertainty usually remain the lowest. After manual inspection, we observe that during multiple-sampling, when reasoning paths move to a branch (may due to a challenging step), the later steps will be more likely diverge, causing higher inconsistency scores than the branching itself. In contrast,  first critical error often locate on the branch. This results in the inconsistency score to miss the first critical error step, and give a relatively higher scores for subsequent steps.

\textbf{Possible reasons for failure cases.}
In the results, we observe even for the best-performing metric, there are a substantial number of failure cases across each subset (e.g. There are only about 40\% to 50\% cases where top 3 uncertain tokens are located in the wrong step). Through manual inspection, we conjecture that these failures may stem from three main factors: (1) the LLM can remain highly confident even when making errors; (2) the identification of the earliest error step by GPT-5.2 is not perfectly accurate (manually examined cases have about 91\% accuracy); and (3) there exist multiple, or even many, uncertainty locations within a response. Although the first wrong step often exhibits higher scores than its neighboring tokens, it does not consistently rank among the highest-scoring positions.

\textbf{Our methods are less competitive in incorrect final answer detection.}
In Table~\ref{tab:final_answer}, we find that multiple sampling based method maintains top performance, while our methods are less competitive in such task. This may because that our methods produce ``sparser'' signal. As shown in Figure~\ref{fig:correct_incorrect_curves}, only one or two tokens are flagged with high uncertainty, while the subsequent generations remain relatively confident conditioned on those tokens, even when they are incorrect. This makes our method well-suited as a metric for predicting intermediate step correctness, instead of
global correctness.
 
\section{Extra Experiments and Results}\label{app:extra_result}

We provide additional experiment results in this section.

\textbf{Adv.~Pert. curve demonstrations for 3 correct and 3 incorrect responses.}
Figure~\ref{fig:correct_incorrect_curves} contains 3 correct cases in blue and 3 incorrect cases in red on the setting of Llama BBH `tracking\_shuffled\_objects\_five\_objects' subset. Each sub-figure shows the Adv.~Pert. curve in the whole response. It helps to understand that how Adv.~Pert. uncertainty is distributed in the response, and the different behavior when producing correct and incorrect final answer.

\textbf{MATH and BBH error locating with $T=0.5$.}
Table~\ref{tab:math_t05} and Table~\ref{tab:bbh_t05} shows the successful detection rate of the first incorrect step performed by various UQ metrics, in model temperature $T=0.5$, which is higher than our main experiment in Table~\ref{tab:math} and Table~\ref{tab:bbh}. We collect at most 20 incorrect cases for each subset, forming at most 120 total cases for MATH and 200 total cases for BBH. The experiment shows similar result with prior tables with $T=0.2$, drawing consistent conclusions as we discussed in Section~\ref{sec:main_exp}.

\textbf{Error locating with larger models.}
Table~\ref{tab:larger_model} shows the successful detection rate of the first incorrect step evaluated on model `Llama-3.1-70B-Instruct' and `Qwen2.5-32B-Instruct'. Due to higher model accuracy, and lack of data samples in BBH (250 rows in each subset), there is insufficient incorrect responses collected from BBH. Therefore, we only provide results from MATH, and evaluate model `Qwen2.5-32B-Instruct' instead of `Qwen2.5-72B-Instruct'. For similar reason as well as large resource cost, we only collect 20 samples from 3 randomly selected MATH subsets: `counting\_and\_probability', `intermediate\_algebra' and `prealgebra'. And we do not present TokUR or multiple-sampling baselines due to resource cost. The results show that our method performs consistently well in models with different parameter sizes.

\begin{wrapfigure}{r}{0.6\textwidth}
    \centering
    \includegraphics[width=\linewidth]{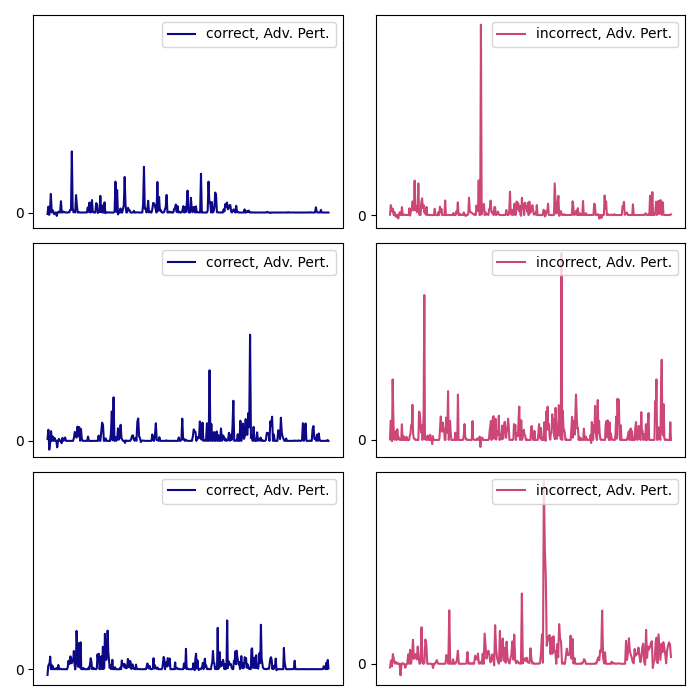}
    \caption{Adv.~Pert. value curves for 3 correct (blue) and 3 incorrect (red) cases.}
    \label{fig:correct_incorrect_curves}
\end{wrapfigure}

\textbf{Incorrect final answer detection for LLM reasoning.}
In Table~\ref{tab:final_answer} we evaluated incorrect final answer detection abilities for various UQ methods. On Llama, Qwen and Mistral models, MATH `algebra' and BBH `logical\_deduction\_five\_objects' subsets, we collect both correct and incorrect answers, perform our proposed methods as well as baselines, and aggregate these UQ metrics for each response. After balancing the samples based on the accuracy, we present AUROC and AP (incorrect answer as positive) scores for each setting. The discussion regarding the results can be found in Section~\ref{sec:main_exp} and Appendix~\ref{sec:extra_exp_explain}.

\textbf{Extra cases demonstrating error token detection.}
In Figure~\ref{fig:extra_math_case} and Figure~\ref{fig:extra_logic_case}, we present extra cases with the same format as Figure~\ref{fig:intro_case} and Figure~\ref{fig:main_case}. In Figure~\ref{fig:extra_math_case}, we collect cases from MATH `algebra', with Llama, OpenMath, Qwen, Mistral each having one case. In Figure~\ref{fig:extra_logic_case}, we collect cases from BBH `logical\_deduction\_five\_objects' subset, with Llama, Qwen, Mistral having one case each. These cases provide more support to our conclusions in the case studies in Section~\ref{sec:main_exp}, that perturbation-based methods identify erroneous reasoning steps through locating uncertain tokens.

\begin{table}[h]
    \centering
    \caption{Result on larger models. We evaluated incorrect step detection on `Llama-3.1-70B-Instruct' and `Qwen2.5-32B-Instruct', T=0.2, on 3 subsets of MATH.}
    \label{tab:larger_model}
    \vspace{0.3cm}
    \resizebox{0.6\linewidth}{!}{

    \begin{tabular}{c|ccc|ccc}
    \hline \hline
    \textbf{Math}
    & \multicolumn{3}{c|}{\textbf{Llama}}
    & \multicolumn{3}{c}{\textbf{Qwen}}\\
    \hline
    & \multicolumn{3}{c|}{Acc=0.80}
    & \multicolumn{3}{c}{Acc=0.93}\\
    &  top3 & top5 & 1\% & top3 & top5 & 1\% \\
    NLL  &  0.32 & 0.43 & 0.45  &  0.34 & 0.45 & 0.55 \\
    Entropy  &  0.35 & 0.42 & 0.38  &  0.32 & 0.50 & 0.56 \\
    Margin  &  0.32 & 0.43 & 0.40  &  0.31 & 0.45 & 0.60 \\
    \hline
    Rand. Pert.  &  0.35 & \textbf{0.62} & 0.50  &  0.31 & 0.39 & 0.48 \\
    Adv. Pert.  &  \textbf{0.47} & \textbf{0.62} & \textbf{0.58}  &  \textbf{0.37} & \textbf{0.56} & \textbf{0.68} \\
    \hline\hline
    &  top1 & top2 & top3  &  top1 & top2 & top3 \\
    CCP  &  0.17 & 0.33 & 0.57  &  0.24 & 0.37 & 0.50 \\
    TokenSAR  &  0.15 & 0.35 & 0.53  &  0.23 & 0.39 & 0.48 \\
    \hline
    Rand. Pert.  &  \textbf{0.23} & 0.43 & \textbf{0.65}  &  0.23 & 0.27 & 0.42 \\
    Adv. Pert.  &  \textbf{0.23} & \textbf{0.52} & \textbf{0.65}  &  \textbf{0.29} & \textbf{0.47} & \textbf{0.56} \\
    \hline \hline
    \end{tabular}
    }

\end{table}

\begin{table}[h]
    \centering
    \caption{Result on MATH with $T=0.5$. The 95\% conf.~interval at 0.50 is $\approx\pm 0.089$ for 120 Bernoulli trials.}
    \label{tab:math_t05}
    \vspace{-0.2cm}
    \resizebox{0.9\linewidth}{!}{

    \begin{tabular}{c|ccc|ccc|ccc|ccc}
    \hline \hline
    \textbf{Math} & \multicolumn{3}{c|}{\textbf{Llama}}
    & \multicolumn{3}{c|}{\textbf{OpenMath}}
    & \multicolumn{3}{c|}{\textbf{Qwen}}
    & \multicolumn{3}{c}{\textbf{Mistral}} \\
    \hline
    & \multicolumn{3}{c|}{Acc=0.60}
    & \multicolumn{3}{c|}{Acc=0.75}
    & \multicolumn{3}{c|}{Acc=0.87}
    & \multicolumn{3}{c}{Acc=0.14} \\
    &  top3 & top5 & 1\%  &  top3 & top5 & 1\%  &  top3 & top5 & 1\%  &  top3 & top5 & 1\% \\
    NLL  &  0.33 & 0.42 & 0.43  &  0.39 & 0.46 & 0.42  &  0.29 & 0.41 & 0.53  &  0.33 & 0.46 & 0.36 \\
    Entropy  &  0.33 & 0.38 & 0.42  &  0.38 & 0.50 & 0.48  &  0.28 & 0.39 & 0.48  &  0.32 & 0.42 & 0.36 \\
    Margin  &  0.28 & 0.40 & 0.43  &  0.37 & 0.53 & 0.51  &  0.28 & 0.39 & 0.53  &  0.42 & 0.53 & 0.48 \\
    TokUR  &  0.42 & 0.51 & 0.55  &  0.31 & 0.47 & 0.41  &  0.32 & 0.43 & 0.59  &  0.41 & 0.48 & 0.46 \\
    \hline
    Rand. Pert.  &  0.38 & 0.54 & 0.56  &  0.44 & 0.56 & 0.51  &  0.35 & 0.47 & 0.57  &  \textbf{0.49} & \textbf{0.67} & \textbf{0.65} \\
    Adv. Pert.  &  \textbf{0.48} & \textbf{0.66} & \textbf{0.63}  &  \textbf{0.47} & \textbf{0.62} & \textbf{0.55}  &  \textbf{0.40} & \textbf{0.50} & \textbf{0.62}  &  \textbf{0.49} & 0.62 & 0.59\\
    \hline \hline
    \end{tabular}
    }
    
    \vspace{0.3cm}
    \resizebox{0.9\linewidth}{!}{

    \begin{tabular}{c|ccc|ccc|ccc|ccc}
    \hline \hline
    \textbf{Math} & \multicolumn{3}{c|}{\textbf{Llama}}
    & \multicolumn{3}{c|}{\textbf{OpenMath}}
    & \multicolumn{3}{c|}{\textbf{Qwen}}
    & \multicolumn{3}{c}{\textbf{Mistral}} \\
    \hline
    &  top1 & top2 & top3  &  top1 & top2 & top3  &  top1 & top2 & top3  &  top1 & top2 & top3   \\
    CCP  &  0.17 & 0.34 & 0.44  &  0.24 & 0.40 & 0.50  &  0.18 & 0.33 & 0.42  &  0.33 & 0.48 & 0.70 \\
    TokenSAR  &  0.18 & 0.32 & 0.41  &  0.23 & 0.37 & 0.53  &  0.19 & 0.33 & 0.42  &  0.39 & 0.55 & 0.73 \\
    Graph Uncert.  &  0.15 & 0.29 & 0.40  &  0.22 & 0.32 & 0.44  &  0.13 & 0.19 & 0.26  &  0.24 & 0.43 & 0.51 \\
    \hline
    Rand. Pert.  &  0.28 & 0.49 & 0.58  &  \textbf{0.33} & \textbf{0.51} & 0.60  &  \textbf{0.25} & 0.40 & 0.51  &  \textbf{0.50} & \textbf{0.68} & \textbf{0.78} \\
    Adv. Pert.  &  \textbf{0.37} & \textbf{0.55} & \textbf{0.64}  &  \textbf{0.33} & \textbf{0.51} & \textbf{0.65}  &  0.24 & \textbf{0.43} & \textbf{0.57}  &  0.49 & 0.66 & \textbf{0.78} \\
    \hline \hline
    \end{tabular}
    }
    \vspace{0.4cm}
    \centering
    \caption{Result on Big-Bench~Hard with $T=0.5$. The 95\% conf.~interval at 0.50 is $\approx\pm 0.069$ for 200 Bernoulli trials.}
    \label{tab:bbh_t05}
    \vspace{-0.2cm}
    \resizebox{0.7\linewidth}{!}{

    \begin{tabular}{c|ccc|ccc|ccc}
    \hline \hline
    \textbf{Logic} & \multicolumn{3}{c|}{\textbf{Llama}}
    & \multicolumn{3}{c|}{\textbf{Qwen}}
    & \multicolumn{3}{c}{\textbf{Mistral}} \\
    \hline
    & \multicolumn{3}{c|}{Acc=0.74}
    & \multicolumn{3}{c|}{Acc=0.84}
    & \multicolumn{3}{c}{Acc=0.46} \\
    &  top3 & top5 & 1\%  &  top3 & top5 & 1\%  &  top3 & top5 & 1\% \\
    NLL  &  0.32 & 0.42 & 0.36  &  0.24 & 0.39 & 0.38  &  0.27 & 0.41 & 0.33 \\
    Entropy  &  0.28 & 0.38 & 0.31  &  0.21 & 0.34 & 0.33  &  0.25 & 0.36 & 0.26 \\
    Margin  &  0.28 & 0.36 & 0.35  &  0.23 & 0.36 & 0.37  &  0.31 & 0.47 & 0.36 \\
    TokUR.  &  0.32 & 0.44 & 0.37  &  0.23 & 0.32 & 0.33  &  0.34 & 0.47 & 0.36 \\
    \hline
    Rand. Pert.  &  0.35 & 0.47 & 0.41  &  0.33 & 0.45 & 0.43  &  0.38 & 0.55 & 0.46 \\
    Adv. Pert.  &  \textbf{0.40} & \textbf{0.53} & \textbf{0.48}  &  \textbf{0.40} & \textbf{0.46} & \textbf{0.48}  &  \textbf{0.45} & \textbf{0.56} & \textbf{0.48} \\
    \hline \hline
    \end{tabular}
    }
    
    \vspace{0.3cm}

    \resizebox{0.7\linewidth}{!}{

    \begin{tabular}{c|ccc|ccc|ccc}
    \hline \hline
    \textbf{Logic} & \multicolumn{3}{c|}{\textbf{Llama}}
    & \multicolumn{3}{c|}{\textbf{Qwen}}
    & \multicolumn{3}{c}{\textbf{Mistral}} \\
    \hline
    &  top1 & top2 & top3  &  top1 & top2 & top3  &  top1 & top2 & top3   \\
    CCP  &  0.22  &  0.37  &  0.47  &  0.13  &  0.26  &  0.38  &  0.27  &  0.46  &  0.60 \\
    TokenSAR  &  0.23  &  0.36  &  0.47  &  0.13  &  0.25  &  0.39  &  0.28  &  0.48  &  0.62 \\
    Graph Uncert.  &  0.14  &  0.29  &  0.37  &  0.11  &  0.18  &  0.23  &  0.16  &  0.33  &  0.57 \\
    \hline
    Rand. Pert.  &  0.25  &  0.43  &  0.56  &  0.18  &  0.30  &  0.42  &  0.35  &  \textbf{0.58}  &  0.75 \\
    Adv. Pert.  &  \textbf{0.33}  &  \textbf{0.49}  &  \textbf{0.61}  &  \textbf{0.22}  &  \textbf{0.35}  &  \textbf{0.47}  &  \textbf{0.37}  &  0.57  &  \textbf{0.76} \\
    \hline \hline
    \end{tabular}
    }
\end{table}

\begin{table}[h]
    \centering
    \caption{Incorrect final answer detection for LLM reasoning. We select ``algebra'' subset for MATH and ``logical\_deduction\_five\_objects'' subset for BBH. The number of positive and negative samples are balanced by accuracy. SC indicates Self-Consistency~\cite{wang2022self} to replace Graph Uncertainty for representing sampling-based method.}
    \label{tab:final_answer}
    \vspace{0.3cm}
    \resizebox{0.7\linewidth}{!}{

    \begin{tabular}{c|cc|cc|cc}
    \hline \hline
    \textbf{Math}
    & \multicolumn{2}{c|}{\textbf{Llama}}
    & \multicolumn{2}{c|}{\textbf{Qwen}}
    & \multicolumn{2}{c}{\textbf{Mistral}} \\
    \hline
    & \multicolumn{2}{c|}{Acc=0.77}
    & \multicolumn{2}{c|}{Acc=0.94}
    & \multicolumn{2}{c}{Acc=0.15} \\
    & AUROC & AP & AUROC & AP & AUROC & AP \\
    CCP  &  0.784 & 0.512  &  0.873 & 0.513  &  0.798 & 0.946 \\
    TokenSAR  &  0.696 & 0.610  &  0.728 & 0.207  &  0.693 & 0.882 \\
    TokUR  &  0.896 & 0.764  &  0.864 & 0.394  &  0.898 & 0.983 \\
    SC  &  \textbf{0.910} & \textbf{0.772}  &  \textbf{0.994} & \textbf{0.992}  &  \textbf{0.987} & \textbf{0.998} \\
    \hline
    Rand. Pert.  &  0.733 & 0.489  &  0.857 & 0.461  &  0.900 & 0.982 \\
    Adv. Pert.  &  0.754 & 0.577  &  0.854 & 0.347  &  0.860 & 0.972 \\
    \hline \hline
    \textbf{Logic}
    & \multicolumn{2}{c|}{\textbf{Llama}}
    & \multicolumn{2}{c|}{\textbf{Qwen}}
    & \multicolumn{2}{c}{\textbf{Mistral}} \\
    \hline
    & \multicolumn{2}{c|}{Acc=0.60}
    & \multicolumn{2}{c|}{Acc=0.70}
    & \multicolumn{2}{c}{Acc=0.35} \\
    & AUROC & AP & AUROC & AP & AUROC & AP \\
    CCP  &  0.604 & 0.476  &  0.699 & 0.457  &  0.733 & 0.765 \\
    TokenSAR  &  0.621 & 0.505  &  0.701 & 0.504  &  0.698 & 0.786 \\
    TokUR  &  0.733 & 0.630  &  0.714 & 0.540  &  0.762 & 0.811 \\
    SC  &  \textbf{0.845} & \textbf{0.741}  &  \textbf{0.879} & \textbf{0.671}  &  \textbf{0.821} & \textbf{0.881} \\
    \hline
    Rand. Pert.  &  0.637 & 0.501  &  0.683 & 0.552  &  0.758 & 0.849 \\
    Adv. Pert.  &  0.653 & 0.521  &  0.661 & 0.504  &  0.700 & 0.785 \\
    \hline \hline
    \end{tabular}
    }

\end{table}

\begin{figure*}[t]
    \centering
    \begin{minipage}{0.75\linewidth}
        \centering
            \centering
            \includegraphics[width=\linewidth]{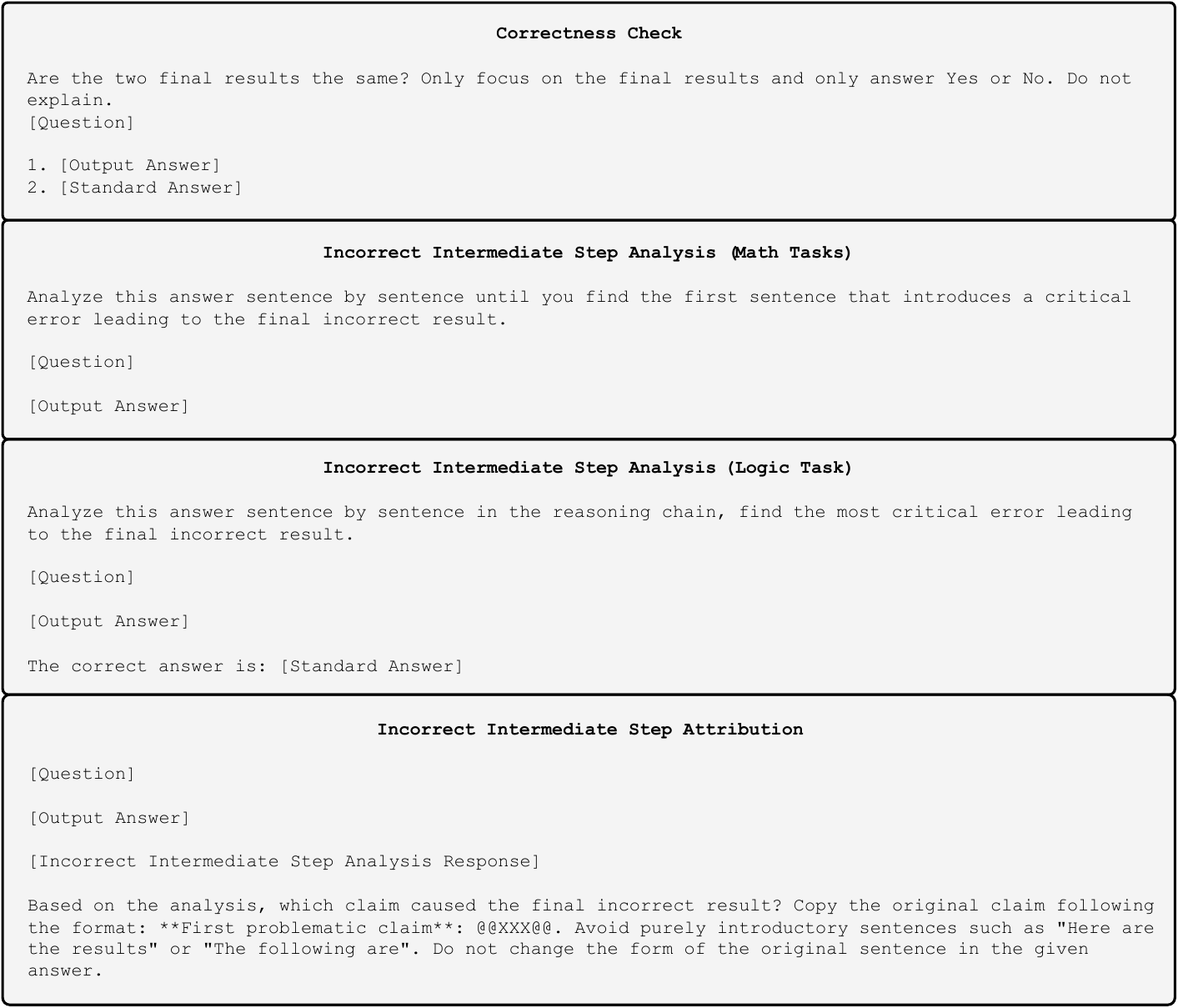}
        \end{minipage}
        \caption{Prompts used in the experiments. Title of each prompt is in bold, placeholders are marked as `[]' with description within.}
    \label{fig:prompts}
\end{figure*}

\begin{figure}[t]
\centering
    \begin{minipage}{0.95\linewidth}
        \includegraphics[width=\linewidth]{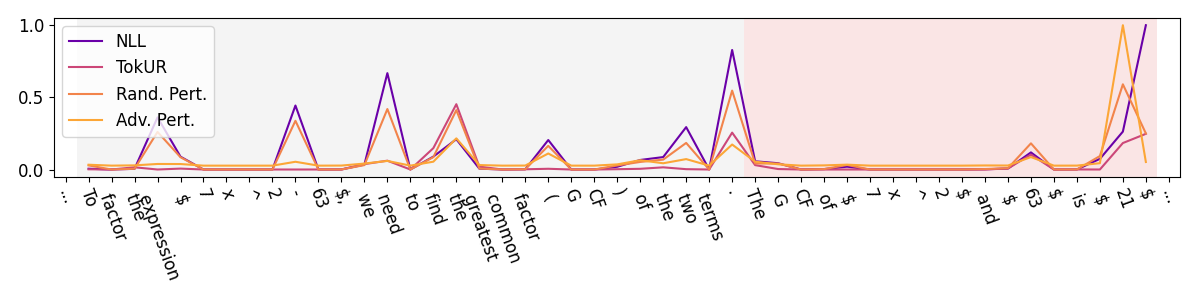}
        {\small \textbf{$\bullet$Calculation error identified in Llama.} The LLM is factoring the expression $7x^2-63$, where it mistakenly output the GCF of $7$ and $63$ as `\textcolor{red}{21}'. Comparing to probability, perturbation based methods successfully identified error token.}

        \includegraphics[width=\linewidth]{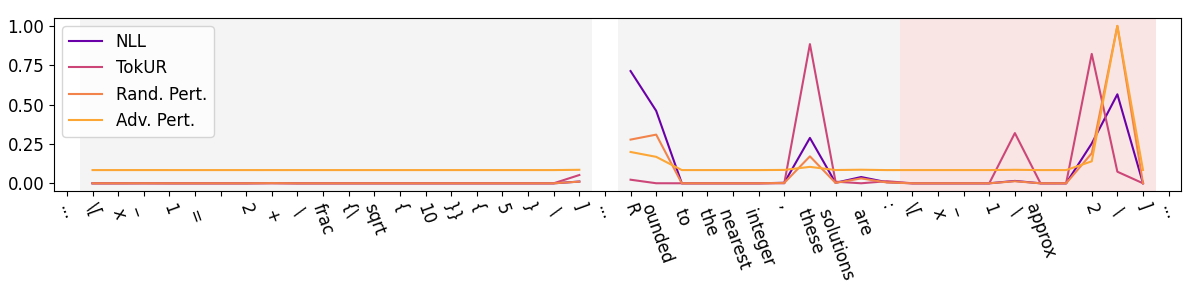}
        {\small \textbf{$\bullet$Generation error identified in OpenMath.} The LLM is rounding $x_1 = 2 + \frac{\sqrt{10}}{5}$ to the nearest integer. After output token `2' (predict it attempts to generate $2 + 1$), the next token `\textcolor{red}{/}' ends the equation. Comparing to probability, perturbation based methods are more highlighting the error. For the previous given context where the curve is mostly flat, we think it is because this content is copied from the same equation further at front, which make this sequence in a low uncertainty.}

        \includegraphics[width=\linewidth]{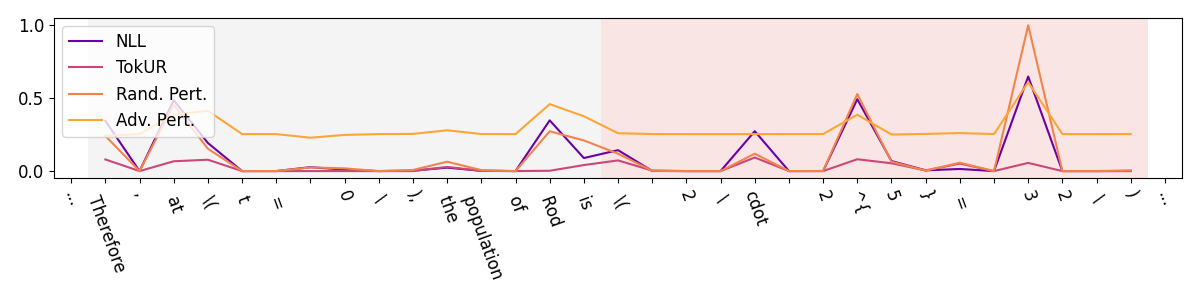}
        {\small \textbf{$\bullet$Calculation error identified in Qwen.} The model mistakenly calculates ``$2 \cdot 2^5 = \textcolor{red}{32}$'', where token `\textcolor{red}{3}' is successfully identified. In this case adversarial perturbation maintains a uniformly high value, and its reaction to error token is less significant than random perturbation, although it is still in one of the top-3 uncertain tokens across the response.}

        \includegraphics[width=\linewidth]{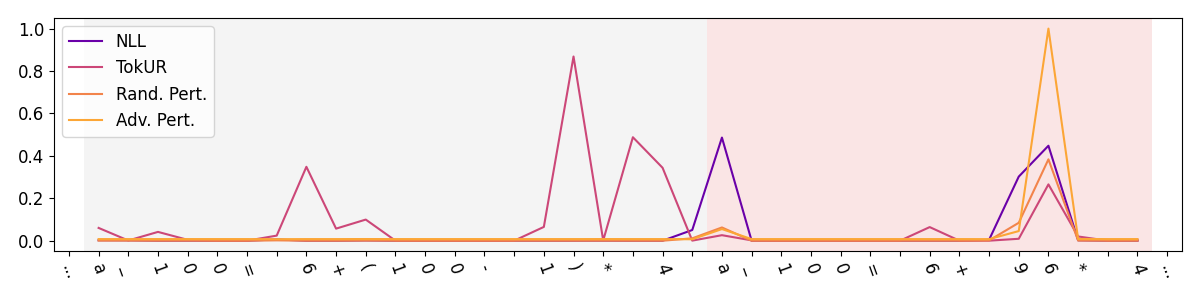}
        {\small \textbf{$\bullet$Calculation error identified in Mistral.} When calculating $a_{100} = 6 + (100 - 1) \times 4$, the model derives $a_{100} = 6 + 9\textcolor{red}{6} \times 4$ in mistake. The error token `\textcolor{red}{6}' is successfully identified by perturbation based methods.}
    \end{minipage}
    \caption{Extra case studies on math reasoning task with four models each. Each uncertainty score is min-max normalized over the complete output to fit the figure, and only the critical tokens are shown. Red background indicates a region of tokens where critical error occurs.}
    \label{fig:extra_math_case}
\end{figure}

\begin{figure}[t]
\centering
    \begin{minipage}{0.95\linewidth}
        \includegraphics[width=\linewidth]{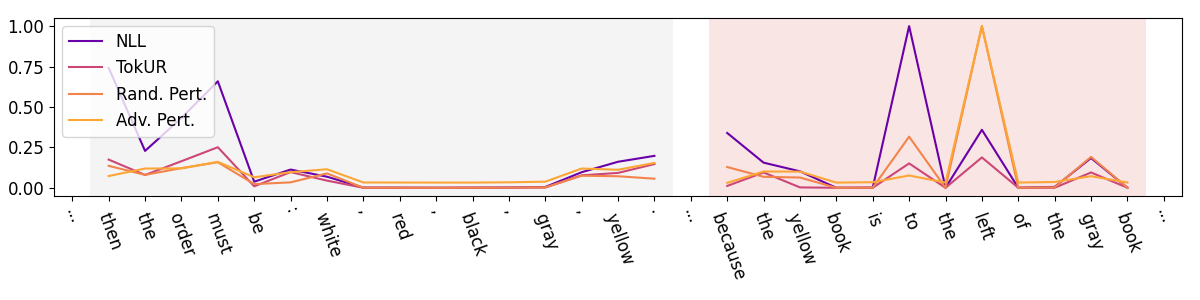}
        {\small \textbf{$\bullet$ Logical error identified in Llama.} The LLM is tasked to sort 5 books in different colors. In the middle of the reasoning, LLM has already derived the correct order for the book: `white', `red', `black', `gray', `yellow'. But later, in the answer selection stage, it incorrectly output ``the yellow book is to the \textcolor{red}{left} of the gray book''(the correct token should be ``right''). For another uncertainty token `to', there may be various ways to express (e.g. ``on the left side''), which may cause a high NLL and Rand.~Pert. score. But the true error token `\textcolor{red}{left}' is successfully identified by Adv.~Pert.}

        \includegraphics[width=\linewidth]{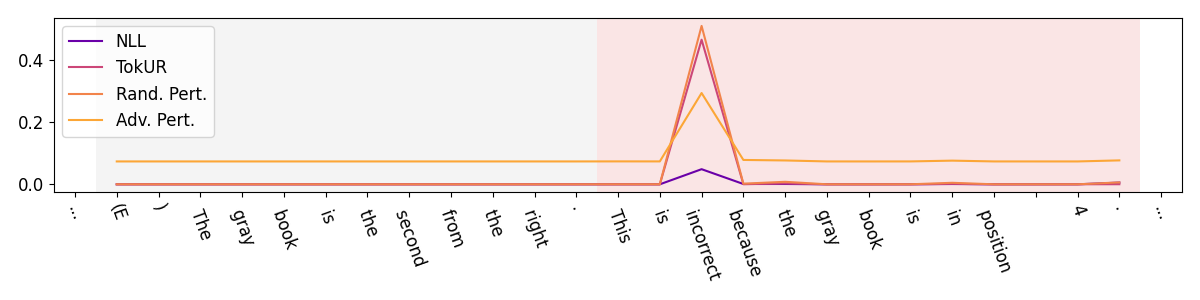}
        {\small \textbf{$\bullet$ Logical error identified in Qwen.} The LLM is tasked to sort 5 books in different colors. From previous reasoning, it has ``the gray book is in position 4''. When LLM is identifying the options provided by the question, it failed to deduce that ``the gray book is in position 4'' is equivalent to ``the gray book is the second from the right'', thus it outputs ``This is \textcolor{red}{incorrect}''. Perturbation based methods successfully identify this token in error.}

        \includegraphics[width=\linewidth]{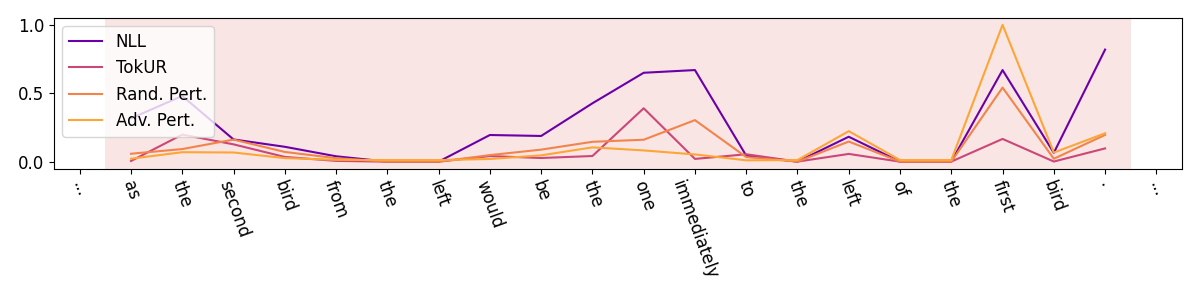}
        {\small \textbf{$\bullet$ Logical error identified in Mistral.} The model is asked to sort 5 birds. the error is ``the second bird from the left would be the one immediately to the left of the \textcolor{red}{first} bird''. The correct version should be either ``to the right of the first bird'' or ``to the left of the third bird'', therefore we observed a small peak at the prior token `left', and a large peak at the latter error token ``\textcolor{red}{first}''.}
    \end{minipage}
    \caption{Extra case studies on logical reasoning task with three models each. Each uncertainty score is min-max normalized over the complete output to fit the figure, and only the critical tokens are shown. Red background indicates a region of tokens where critical error occurs.}
    \label{fig:extra_logic_case}
\end{figure}


\end{document}